%% file: main.tex
\documentclass[11pt]{article}
\usepackage[letterpaper,margin=1in]{geometry}
\usepackage{times,natbib}
\usepackage{amsmath,amssymb,amsthm,booktabs,longtable,graphicx}
\usepackage[T1]{fontenc}
\usepackage{hyperref,url}
\hypersetup{hidelinks}
\newtheorem{proposition}{Proposition}
\newtheorem{lemma}{Lemma}
\newcommand{\E}{\mathbb{E}}
\newcommand{\Prb}{\mathbb{P}}
\newcommand{\gap}{\operatorname{Gap}}
\title{Missingness-Aware Conformal Prediction\\Under Cross-Hospital Distribution Shift}
\author{Liang You$^{1,*}$, Dongwen Ou$^2$, Hengyu Shi$^3$, Siyuan Dai$^1$\\[0.5em]
\small $^1$University of Pittsburgh \quad $^2$Duke University\\
\small $^3$Xiamen University\\[0.3em]
\small $^*$Corresponding author: \texttt{liangyou03@pitt.edu}}
\date{September 2026}
\begin{document}
\maketitle

\begin{abstract}
Clinical measurements are recorded for some patients but not others, at rates that differ across hospitals, and marginal conformal coverage does not ensure coverage within groups defined by missingness. We propose a missingness-aware conformal calibration procedure for mortality prediction under cross-hospital distribution shift. It selects a measurement on an independent sample, groups patients by whether that measurement is recorded, and applies Mondrian calibration within each group, so no calibration outcome is reused. We evaluate the procedure across hospitals in eICU and across care units within one MIMIC-IV hospital, using three predictors. Relative to pooled calibration, it reduces the average worst-group coverage gap on its selected groups in all six settings, with a median reduction of 1.9 percentage points; paired site-bootstrap intervals exclude zero in five. These gains do not extend uniformly. Calibration by predicted risk achieves smaller gaps on a broader panel of missingness groups, and when eICU hospitals are evaluated separately, the gain shrinks for all three predictors and reverses in sign for one. We explain this discrepancy with a hospital-level decomposition. Pooling reweights hospitals through a covariance between group shares and coverage errors, and lets errors of opposite sign cancel: weighting explains the reversal, and cancellation accounts for most of the attenuation for the other two predictors. Constructed population distributions show that pooled and within-hospital evaluations can rank calibration methods oppositely even without sampling noise. Pooled improvement alone therefore cannot establish better coverage within hospitals, even when the calibration groups are fixed.
\end{abstract}

\section{Introduction}

Clinical records differ in which measurements they contain. A mortality model may receive a full laboratory panel for one patient and only vital signs for another. Which measurements are recorded carries predictive information \citep{Che2018} and reflects how care is delivered and documented \citep{Agniel2018}. Conformal prediction (CP) turns a predictor's outputs into prediction sets that contain the true outcome with a target probability, but its standard guarantee is \emph{marginal}: coverage is averaged over all patients. A single calibration threshold treats patients with and without a measurement alike, even when their prediction errors differ. Coverage can then meet its target on average while falling short for, say, patients without a recorded urine output.

Deployment across hospitals compounds the problem. Dataset shift is a recognized hazard for clinical prediction \citep{Finlayson2021}, and missingness can shift in two distinct ways. The proportion of patients missing a measurement can change between sites, and the prediction errors among patients with the same missingness status can change as well. Calibrating each missingness group separately prevents groups with different error distributions from sharing one threshold, but it cannot stabilize a group's error distribution at a new site. Marginal coverage, coverage within groups, and coverage within sites are therefore three different targets.

We propose \emph{missingness-aware conformal calibration}. Using a selection sample independent of the training and calibration data, the procedure chooses a measurement whose availability varies across sites, divides patients by whether it is recorded, and applies Mondrian calibration \citep{Vovk2005} within each group. Because selection uses no calibration outcomes, the usual Mondrian guarantee applies to these groups under exchangeability; calibrating a different partition, such as predicted risk, does not generally provide it. Group membership depends only on the mask, so it is also unchanged when the mortality predictor is updated. These properties make availability a practical basis for monitoring a specified clinical population.

We evaluate the procedure across hospitals in eICU and across care units within one MIMIC-IV hospital, using logistic regression (LR), gradient-boosted trees (XGBoost), and a multilayer perceptron (MLP). Relative to pooled calibration, missingness-aware calibration reduces the average worst-group coverage gap on its selected groups in all six dataset--predictor settings, with a median reduction of 1.9 percentage points. The gains do not extend uniformly, however. Calibration by predicted risk achieves smaller gaps on a broader panel of missingness groups, and when eICU hospitals are evaluated separately, the gain shrinks for all three predictors and reverses in sign for XGBoost.

To explain this discrepancy, we compare pooled and within-hospital evaluation directly. Pooling changes two things: it weights each hospital by its share of the group being evaluated, and it lets undercoverage at one hospital cancel overcoverage at another. We show that the weighting effect is a covariance between hospitals' group shares and their coverage errors, and decompose the difference between the two evaluations into weighting, cancellation, and changes in the worst group. Weighting accounts for the XGBoost reversal, whereas cancellation removes most of the LR and MLP gains. Constructed population distributions show that aggregation alone can reverse method rankings, without sampling noise. None of this analysis depends on how groups are defined, so it applies to any calibration partition evaluated across sites.

Our contributions are as follows.
\begin{itemize}
\item A missingness-aware calibration procedure that selects its partition on an independent sample and therefore avoids reusing calibration outcomes (Section~\ref{sec:theory}).
\item An analysis of pooled versus within-site coverage: an identity showing that pooling reweights sites by a covariance between group shares and coverage errors, an ordering of aggregation schemes, and a decomposition into weighting, cancellation, and worst-group effects (Section~\ref{sec:aggregation}).
\item A multi-site evaluation on eICU and MIMIC-IV showing that gains on the selected groups do not carry over uniformly to a broader panel of groups, to individual hospitals, or to outcome labels (Section~\ref{sec:experiments}).
\end{itemize}

\section{Related work}

\paragraph{Mondrian and missingness-aware conformal prediction.}
Conformal guarantees typically rely on exchangeability, that is, invariance of the joint distribution under reordering of observations. Mondrian methods calibrate within fixed categories under suitable exchangeability conditions \citep{Vovk2005}, and general treatments distinguish coverage validity from the informativeness of set size \citep{Lei2018,Angelopoulos2023}. \citet{Zaffran2023} obtain coverage conditional on the missingness mask through imputation and data augmentation. For multimodal regression, \citet{Azizi2026Missing} define Mondrian groups by modality availability or by disagreement among predictions. We build on this use of availability, but study binary clinical prediction across sites and distinguish the groups used for calibration from the groups used for evaluation.

\paragraph{Conditional coverage.}
\citet{Gibbs2025} guarantee coverage under every covariate shift in a user-specified function class; we use their official implementation with a finite basis. \citet{Martinez2024} learn interpretable groups with similar score distributions, and \citet{Azizi2026CLEAR} combine epistemic uncertainty about the model with aleatoric variability in the outcome. Our score-tree comparator is a simple grouping control, not the full procedure of \citet{Martinez2024}. Conditional validity \citep{Vovk2012}, distribution-free impossibility results \citep{Barber2021}, and guarantees conditional on the training data \citep{Bian2023} concern different probability statements; our coverage bound averages over calibration scores. Other methods adapt the score itself, for example conformalized quantile regression \citep{Romano2019} and adaptive or regularized classification sets \citep{Romano2020,Angelopoulos2021}; set-valued classifiers balance error control and size \citep{Sadinle2019}. We hold the score fixed to isolate the effect of the calibration groups. \citet{Dabah2025} study how temperature scaling affects conformal classifiers, including the $1-\widehat p_y$ score used here; we instead examine binary scores within fixed clinical calibration cells, where monotone-score invariance separates odds shifts from symmetric temperature changes.

\paragraph{Evaluating coverage.}
\citet{Braun2026Diagnostics} estimate conditional coverage by classification and distinguish undercoverage from overcoverage, and \citet{Zhou2026Assessment} develop learned criteria for assessment and selection. We instead hold clinical groups fixed and ask how conclusions change with the audited groups, the hospital weights, and the order of aggregation. Hierarchical conformal methods address dependence among grouped observations \citep{Lee2026Hierarchical}; our hospital-level comparisons are descriptive and do not provide hierarchical coverage guarantees.

\paragraph{Distribution shift.}
Under covariate shift, weighted conformal prediction reweights calibration scores by input density ratios \citep{Tibshirani2019}, and coverage loss beyond exchangeability can be bounded \citep{Barber2023}. Localized conformal prediction emphasizes nearby observations \citep{Guan2023}, and adaptive conformal inference updates calibration online \citep{Gibbs2021}. For missing data, \citet{Fan2025} combine imputation, masking, and weighting to obtain mask-conditional coverage. Our weighting comparator uses only the frequencies of two mask states and is not an implementation of their method.

\section{Missingness-aware conformal calibration}
\label{sec:theory}

\subsection{Setup}

Let $X$ be a raw record with $d$ variables, some of which may be missing, and let $M\in\{0,1\}^d$ be its missingness mask, with $M_j=1$ if variable $j$ is missing. Preprocessing maps the record to $Z=\Phi(X,M)\in\mathbb{R}^p$. The outcome $Y\in\{0,1\}$ indicates death ($Y=1$), and a fixed predictor $f(Z)$ estimates its probability. We use the nonconformity score
\begin{equation}
 S(X,M,y)=y\{1-f(Z)\}+(1-y)f(Z),
\end{equation}
which is small when the prediction agrees with the candidate label $y$. For a threshold $q$, the prediction set is $\Gamma=\{y\in\{0,1\}:S(X,M,y)\le q\}$. It may contain zero, one, or two labels; we do not force empty sets to include a label. The set covers the true outcome exactly when the true-label score satisfies $S(X,M,Y)\le q$. All theoretical statements condition on the fitted predictor, the preprocessing, and the grouping rule, each of which must be independent of the calibration sample.

With target coverage $1-\alpha$ and the true-label scores of $n$ calibration patients, pooled split conformal prediction sets $q$ to the $r$th smallest score, where $r=\lceil(n+1)(1-\alpha)\rceil$, and to $+\infty$ if $r>n$. Given a fixed group map $G(M)\in\{1,\ldots,K\}$, Mondrian calibration applies the same rule separately to the $n_k$ scores in each group $k$. An empty calibration group receives $q=+\infty$ and hence both labels. We use $\alpha=0.1$ unless stated otherwise.

\subsection{Selecting the partition without reusing calibration data}

Missingness-aware calibration fixes $G$ before calibration, using a selection sample drawn from sites disjoint from those used for training and calibration. A candidate measurement must be nonconstant in the training data and, in the selection sample, missing in $10\%$ to $90\%$ of records with at least 100 records in each state. Hospital metadata are excluded. Let $\mathcal J$ denote the eligible measurements and $\widehat\pi_{hj}$ the missing proportion of measurement $j$ at selection site $h$. We choose the measurement whose missingness varies most across sites,
\begin{equation}
\label{eq:select}
 j^*=\mathop{\arg\max}_{j\in\mathcal J}
 \Bigl\{\max_h\widehat\pi_{hj}-\min_h\widehat\pi_{hj}\Bigr\},\qquad
 G(M)=M_{j^*},
\end{equation}
with ties broken deterministically. Because $G$ is fixed before any calibration outcome is observed, the standard Mondrian analysis applies conditionally on $G$, with no correction for data-dependent group selection: if calibration and test records are exchangeable within each group, then $\Prb(Y\in\Gamma\mid G=g)\ge1-\alpha$ for every group $g$ with calibration data \citep{Vovk2005}. Under shift between sites this guarantee no longer holds, so we evaluate coverage empirically.

The selection criterion is a heuristic. It targets the measurement whose availability depends most on the site and yields an auditable partition, not an optimal one. We therefore compare it with a random eligible measurement and with a prespecified lactate indicator. Since $G$ depends only on the mask, replacing the predictor leaves group membership unchanged, although the new scores require fresh calibration. A risk-based partition retains its membership only if its grouping predictor is frozen.

\paragraph{What separate calibration can and cannot correct.}
Appendix~\ref{app:proof} bounds the expected coverage error by four terms: differences between the score distributions of the groups, shift of a group's score distribution between calibration and test sites, finite calibration samples, and ties. Mondrian calibration removes the first term but not the second. Missingness frequencies identify neither term, so the bound motivates separate calibration but does not rank candidate partitions by their actual errors.

\section{Pooled versus within-site coverage}
\label{sec:aggregation}

Coverage gains are usually reported after pooling test patients across sites. This section shows why such gains need not hold within sites by separating the two ways in which pooled and within-site evaluation differ: the weights given to sites and the order of aggregation. The results hold for any fixed calibration partition.

Consider test sites $h=1,\ldots,H$ and a partition into groups $g=1,\ldots,K$. Let $n_{hg}>0$ be the number of test patients in cell $(h,g)$, with $n_h=\sum_g n_{hg}$, $n_g=\sum_h n_{hg}$, and $n=\sum_h n_h$. We restrict this comparison to sites where every group is observed. For a given calibration method, let $\widehat C_{hg}$ be the fraction of test patients in cell $(h,g)$ whose set covers the true outcome, and $e_{hg}=\widehat C_{hg}-(1-\alpha)$ its signed error. For site weights $w_h\ge0$ with $\sum_h w_h=1$, define
\begin{equation}
\label{eq:adb}
 A_w=\max_g\Bigl|\sum_h w_he_{hg}\Bigr|,\qquad
 D_w=\max_g\sum_h w_h|e_{hg}|,\qquad
 B_w=\sum_h w_h\max_g|e_{hg}|.
\end{equation}
All three use the same site weights for every group, which we call \emph{common weights}, and differ only in the order of aggregation. $A_w$ averages signed errors across sites and then takes the worst group; $D_w$ averages absolute errors and then takes the worst group; $B_w$ takes the worst group within each site and then averages, and is therefore the within-site gap. Pooling patients across sites instead gives the pooled gap
\begin{equation}
 \gap_{\rm pool}=\max_g\Bigl|\sum_h\omega_{hg}e_{hg}\Bigr|,\qquad \omega_{hg}=n_{hg}/n_g,
\end{equation}
which uses a different set of site weights for each group.

\begin{proposition}[Pooling and aggregation]
\label{prop:aggregation}
Let $w_h=n_h/n$ be patient weights, $p_{hg}=n_{hg}/n_h$ the share of group $g$ at site $h$, and $\bar p_g=\sum_h w_hp_{hg}$.
\begin{enumerate}
\item[(i)] $\omega_{hg}=w_hp_{hg}/\bar p_g$. Hence the pooled weights coincide with the common weights, $\omega_{hg}=w_h$ for every $g$ and every $h$ with $w_h>0$, if and only if each group's proportion is the same at every such site.
\item[(ii)] For every group $g$, the pooled signed error differs from its common-weight counterpart by a covariance across sites:
\begin{equation}
\label{eq:cov}
 \sum_h\omega_{hg}e_{hg}=\sum_hw_he_{hg}+\operatorname{Cov}_w\!\left(\frac{p_{hg}}{\bar p_g},\,e_{hg}\right),
\end{equation}
where $\operatorname{Cov}_w(a,b)=\sum_hw_h(a_h-\bar a)(b_h-\bar b)$ with $w$-weighted means $\bar a,\bar b$.
\item[(iii)] For any common weights $w$, $A_w\le D_w\le B_w$.
\end{enumerate}
\end{proposition}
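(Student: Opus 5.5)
The plan is to prove the three parts in order, since each is a short algebraic manipulation of the definitions.

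\emph{Part (i).} Start from $n_{hg}=n_h p_{hg}$, which gives
\[
n_g=\sum_h n_h p_{hg}=n\sum_h w_h p_{hg}=n\bar p_g .
\]
Dividing, $\omega_{hg}=n_{hg}/n_g=w_h p_{hg}/\bar p_g$. Note that $\bar p_g>0$ because every $n_{hg}>0$. The identity shows that $\omega_{hg}=w_h$ at a site with $w_h>0$ holds exactly when $p_{hg}=\bar p_g$.
\begin{itemize}
\item If this holds at every such site, then $p_{hg}$ does not depend on $h$; that is, the group's proportion is constant across sites.
\item Conversely, if $p_{hg}=c_g$ for all sites with $w_h>0$, then $\bar p_g=c_g\sum_h w_h=c_g$, so $\omega_{hg}=w_h$.
\end{itemize}

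\emph{Part (ii).} Write $a_h=p_{hg}/\bar p_g$. Its $w$-weighted mean is $\bar a=\sum_h w_h p_{hg}/\bar p_g=1$. By part (i), $\sum_h\omega_{hg}e_{hg}=\sum_h w_h a_h e_{hg}$. Expand the covariance using the standard identity
\[
\operatorname{Cov}_w(a,e)=\sum_h w_h a_h e_h-\bar a\,\bar e,
\]
which holds because $\sum_h w_h=1$. With $\bar a=1$ and $\bar e=\sum_h w_h e_{hg}$, this gives
\[
\sum_h w_h a_h e_{hg}=\sum_h w_h e_{hg}+\operatorname{Cov}_w(a,e_{\cdot g}),
\]
which is \eqref{eq:cov}.

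\emph{Part (iii).} This is two pointwise inequalities, each followed by a maximum over $g$.
\begin{itemize}
\item \textbf{$A_w\le D_w$.} For each $g$, the triangle inequality with $w_h\ge0$ gives $\bigl|\sum_h w_h e_{hg}\bigr|\le\sum_h w_h|e_{hg}|$. Taking $\max_g$ of both sides preserves the inequality.
\item \textbf{$D_w\le B_w$.} For each $g$ and $h$, $|e_{hg}|\le\max_{g'}|e_{hg'}|$. Multiplying by $w_h\ge0$ and summing gives $\sum_h w_h|e_{hg}|\le B_w$ for every $g$. The maximum over $g$ is therefore at most $B_w$.
\end{itemize}

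\emph{Where care is needed.} No step is a genuine obstacle. The only delicate points are bookkeeping:
\begin{itemize}
\item in (i), the case $w_h=0$, which is why the equivalence is stated only at sites with $w_h>0$, and the need for $\bar p_g>0$ to divide;
\item in (ii), confirming that the weighted mean of $a$ equals exactly $1$, so that no extra $\bar e$ term survives in the covariance expansion.
\end{itemize}
I would check both explicitly.
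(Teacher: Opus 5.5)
Your proposal is correct and follows the paper's proof essentially step for step: (i) via $n_g=n\bar p_g$, (ii) via the weighted-covariance identity with $\bar a=1$ (the paper writes the difference as $\sum_h w_h(a_h-1)e_{hg}$ and uses $\sum_h w_h(a_h-1)=0$, which is the same computation), and (iii) via the triangle inequality and the pointwise bound $|e_{hg}|\le\max_{g'}|e_{hg'}|$. Your explicit check of both directions of the equivalence in (i) spells out what the paper states in one line.
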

\begin{proof}
(i) Dividing the numerator and denominator of $\omega_{hg}=n_hp_{hg}/\sum_{h'}n_{h'}p_{h'g}$ by $n$ gives the identity. Also, $\omega_{hg}=w_h$ exactly when $p_{hg}=\bar p_g$.

(ii) Let $a_h=p_{hg}/\bar p_g$, so $\sum_hw_ha_h=1$. By (i),
\[
\sum_h\omega_{hg}e_{hg}-\sum_hw_he_{hg}
=\sum_hw_h(a_h-1)e_{hg}
=\operatorname{Cov}_w(a,e_{\cdot g}),
\]
where the last equality uses $\sum_hw_h(a_h-1)=0$.

(iii) The first inequality is the triangle inequality. For every $g$,
$\sum_hw_h|e_{hg}|\le\sum_hw_h\max_{g'}|e_{hg'}|$, which gives the second.
\end{proof}

The same statements hold with population coverages and probabilities in place of their empirical versions. For two calibration methods, write $\Delta$ for the difference between them. Moving from the pooled comparison to the within-site comparison then decomposes as
\begin{equation}
\label{eq:decomp}
 \Delta B_w-\Delta\gap_{\rm pool}
 =\underbrace{(\Delta A_w-\Delta\gap_{\rm pool})}_{\text{weighting}}
 +\underbrace{\Delta(D_w-A_w)}_{\text{cancellation}}
 +\underbrace{\Delta(B_w-D_w)}_{\text{worst-group switching}}.
\end{equation}
The weighting term replaces pooled weights by common weights. With patient weights, it arises from the covariance in \eqref{eq:cov}: pooling gives more weight to the sites where a group is common, so each group's pooled error is pulled toward its errors at those sites. The term vanishes if group proportions are constant across sites, or if each group's error is constant across sites under both methods; with equal site weights, it also includes the change from patient to equal weights. The cancellation term measures how much of a method's small $A_w$ comes from errors of opposite sign offsetting one another across sites, and the last term reflects that different sites can have different worst groups. Each method satisfies $A_w\le D_w\le B_w$, but differences between methods can have either sign, so pooled and within-site evaluation can rank methods differently.

Groupwise calibration targets coverage in each group over the calibration population; it does not constrain coverage at each deployment site, so a method's pooled gap can decrease while its within-site gap increases. Varying group proportions are necessary for the weighting term under patient weights but not sufficient for a reversal, which also depends on the coverage errors and their association with those proportions. Our selector deliberately chooses a measurement whose availability varies across sites, so site weights deserve particular scrutiny when it is evaluated; it does not, however, maximize the covariance in \eqref{eq:cov}, which depends on errors at the test sites.

\paragraph{Reversals without sampling noise.}
Because Proposition~\ref{prop:aggregation} holds for population quantities, reversals are not merely artifacts of noisy empirical maxima. Appendix~\ref{app:hospital-mechanism} constructs known score distributions for two hospitals and two groups. Even with no shift between calibration and test populations, exact coverage in every group coexists with nonzero errors within hospitals, and changing hospital weights yields opposite method rankings before and after nonlinear aggregation. With 25 test observations per cell, finite-sample comparisons frequently disagree with the population signs. These examples establish possible mechanisms; they do not identify the cause of any particular clinical reversal.

\section{Clinical evaluation}
\label{sec:experiments}

\subsection{Cohorts and study design}

We retrospectively predict in-hospital mortality from information recorded during the first day in the intensive care unit (ICU). The \emph{eICU cohort} uses GOSSIS-1-eICU v1.0.0, a release derived from the eICU Collaborative Research Database \citep{Raffa2022,Pollard2018,GOSSIS2022}. We use its minimally cleaned table, which retains missing values, rather than the imputed version. Restricting to hospitals with at least 500 records yields 106,488 records from 82 hospitals, with one record per released patient identifier. Supplied mortality predictions, discharge outcomes, length of stay, and identifiers are excluded; screening on training data retains 183 raw predictors.

The \emph{MIMIC-IV cohort} uses ICU stays from MIMIC-IV v3.1 \citep{Johnson2024,Johnson2023}. Of 94,458 adult stays with first-day data, 85,181 belong to the six care units with at least 10,000 stays. Selecting one eligible stay per patient at random yields 58,879 patients in all six units, described by 153 nonconstant raw predictors. Shifts in this cohort are between care units \emph{within one hospital}.

Each assignment divides sites into four disjoint roles: training the predictor, selecting the grouping, calibrating thresholds, and testing. eICU uses ten assignments with approximately $60/10/10/20\%$ of hospitals in these roles. MIMIC-IV holds out each of the six units for testing and allocates the remaining five by three fixed rotations (two for training, two for selection, one for calibration), giving 18 assignments. No site or patient appears in two roles within an assignment. Different assignments share sites, so they are not independent replications.

For each dataset, predictor, and assignment, all calibration methods share the same predictions, calibration patients, and test patients. The predictors are LR, XGBoost, and an MLP with one hidden layer. Feature selection, category encoding, imputation, and scaling are fitted on training data only, and the grouping rule uses selection data only. Appendix~\ref{app:implementation} gives model settings and convergence diagnostics. Because predictors use first-day data, this task is neither prediction at admission nor a test of clinical safety.

\subsection{Comparators and evaluation targets}

We compare seven calibration rules that share the same score and differ only in how calibration scores are grouped or weighted:
\emph{pooled} calibration;
\emph{missingness-aware} calibration (ours);
two partition controls, a \emph{random} eligible measurement and \emph{lactate} availability;
\emph{risk terciles}, with cut points at the selection-sample terciles of predicted probability;
a \emph{score tree} with at most three leaves that predicts the nonconformity score from predicted risk and overall missingness, the fraction of retained raw predictors that are absent;
and \emph{mask weighting}, which reweights calibration scores by the estimated ratio of target to calibration frequencies of the two selected mask states (Appendix~\ref{app:implementation}).
Mask weighting uses unlabeled target inputs; the other six rules use no target data. We additionally include binary risk groups and \emph{label-only} calibration in Table~\ref{tab:core}. Label-only calibration fits one threshold per outcome and compares each candidate label with its own threshold; it never uses the true test label to construct a set.

All methods are evaluated on the same targets within an assignment. The \emph{selected gap} is the largest absolute deviation from target coverage across the two selected groups,
\begin{equation}
 \gap_{\rm sel}=\max_{g\in\{0,1\}}
 \left|\widehat{\Prb}(Y\in\Gamma\mid M_{j^*}=g)-0.9\right|.
\end{equation}
The \emph{panel gap} takes the same maximum over both states of \emph{every} measurement in $\mathcal J$, and therefore audits groups beyond those used for calibration. Panels contain 83--106 measurements in eICU and 46--56 in MIMIC-IV, depending on the assignment; the smallest audit groups in the full test samples contain 775 and 189 patients. Panel groups overlap and do not cover combinations of missing variables or demographic attributes. Because absolute gaps penalize both undercoverage and overcoverage, we also report marginal coverage and average set size.

We also compare with the official implementation of conditional conformal prediction \citep{Gibbs2025}, using a basis of three functions: a constant, the predicted probability $f(Z)$, and overall missingness. This comparison uses XGBoost, the full calibration sample, and 2,000 randomly sampled test patients per assignment; every other method is reevaluated on the same patients, and these results are reported separately from full-test results.

\subsection{Gains on the selected groups}

Missingness-aware calibration reduces the observed mean selected gap relative to pooling in all six settings (Table~\ref{tab:core}). Reductions range from 0.34 to 3.13 percentage points (pp), with a median of 1.87 pp. Five of the six paired site-bootstrap intervals in Table~\ref{tab:core} exclude zero, also after adjustment for six comparisons (Appendix~\ref{app:paired}). The exception is eICU XGBoost, where the two rules produce different sets for only 1.03\% of test patients on average (Appendix~\ref{app:set-difference}); improvement in all six means is therefore not evidence of improvement in all six populations. MIMIC-IV intervals rest on only six units and serve as sensitivity summaries. Figure~\ref{fig:paired} shows variation across assignments.

\begin{table}[t]
\caption{Mean selected gap on full test samples. Risk (2/3): two/three predicted-risk groups; Label: outcome-only calibration. Gain is pooled minus missingness in percentage points. Paired 95\% basic bootstrap intervals condition on fitted models and assignments; MIMIC-IV uses only six units. Ten eICU and 18 MIMIC-IV assignments share predictions and evaluation groups.}
\label{tab:core}
\centering\footnotesize\setlength{\tabcolsep}{3pt}
\input{tables/core_table.tex}
\end{table}

\begin{figure}[t]
\centering
\includegraphics[width=\linewidth]{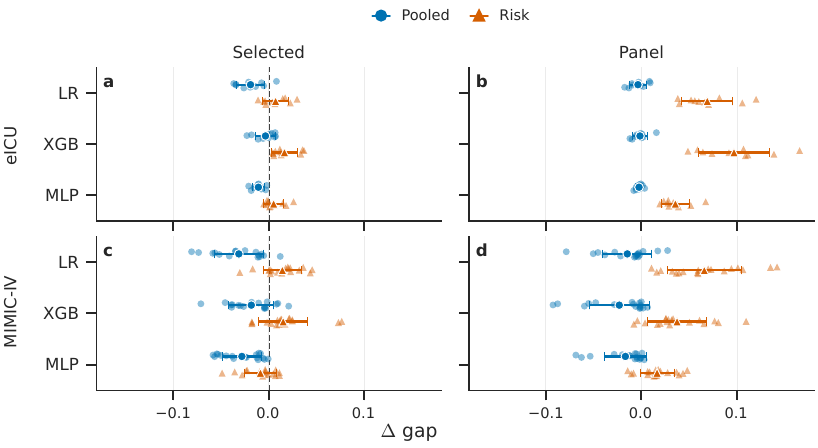}
\caption{Coverage differences on the selected groups and the audit panel. $\Delta$ gap is the gap of missingness-aware calibration minus that of the indicated comparator; negative values favor missingness. Pooled: pooled calibration; Risk: risk terciles. Small points are individual assignments (ten eICU, 18 MIMIC-IV); large markers and bars show means $\pm$ one SD. Assignments overlap, so bars are not confidence intervals.}
\label{fig:paired}
\end{figure}

\subsection{Do the gains extend to other missingness groups?}

\paragraph{Risk partitions.}
Risk terciles achieve smaller panel gaps than missingness-aware calibration for every dataset and predictor. For eICU XGBoost, the full-test panel gap is 0.020 with risk terciles versus 0.116 with missingness, with average set sizes of 1.026 and 0.952; for MIMIC-IV XGBoost, the gaps are 0.042 and 0.079, with sizes 1.111 and 1.026. Better panel coverage thus comes with larger sets. An average size below one reflects empty sets, not greater clinical efficiency.

Risk terciles also achieve a smaller selected gap in five of six settings. The advantage does not come from using more thresholds: binary risk groups, which use two thresholds as missingness calibration does, still have smaller panel gaps in all six settings and smaller selected gaps in three (0.029 and 0.012 for eICU XGBoost). It also persists when the panel is restricted to groups of at least 500 or 1,000 patients (Appendix~\ref{app:panel-size}). Label-only calibration, in contrast, does not reduce the selected gaps relative to pooling (Table~\ref{tab:core}). Different partitions nevertheless target different populations: risk calibration does not generally guarantee coverage in missingness groups, and its membership changes whenever the grouping predictor is updated.

\paragraph{Partition controls.}
The controls also limit the case for selecting by variation in missingness. On eICU XGBoost, the selected gap is 0.026 for our rule, 0.022 for a random eligible measurement, and 0.020 for lactate availability. On MIMIC-IV, our rule has a smaller mean selected gap than both controls for all three predictors. Because these selected gaps are evaluated on the groups chosen by our rule, they do not rank candidate partitions without bias. On the panel, random and clinical partitions can be competitive, while risk terciles remain stronger. The evidence supports a transparent heuristic, not an optimal selector.

\paragraph{Conditional conformal prediction.}
On the matched eICU subsamples, conditional conformal prediction improves the panel gap over missingness-aware calibration (0.055 versus 0.125) but slightly worsens the selected gap (0.036 versus 0.031). On MIMIC-IV, it improves both gaps, with larger sets. Adding the selected mask indicator to its basis, without further basis search, reduces the selected gap from 0.036 to 0.025 in eICU but increases it from 0.038 to 0.040 in MIMIC-IV; aligning the basis with the evaluation target does not yield uniform gains. Table~\ref{tab:aligned} reports both panel gap and shortfall for all four methods on these same subsamples. Subsample gaps are not comparable with full-test gaps, since their smallest groups contain only 65 eICU and 40 MIMIC-IV patients.

\subsection{Do the gains extend to individual hospitals?}

The gains above pool patients across test hospitals. To evaluate them within hospitals, we compute each eICU hospital's selected gap, average its repeated evaluations, and then average across the 76 hospitals tested at least once. Missingness-minus-pooled differences are $-0.0045$, $+0.0023$, and $-0.0004$ for LR, XGBoost, and MLP. All three are smaller in magnitude than the corresponding pooled gains, and only XGBoost has the opposite sign. Appendix~\ref{app:implementation} reports sensitivity of the MLP comparison to the training cap.

Paired resampling of hospitals gives 95\% intervals of $[-0.0083,-0.0006]$, $[-0.0002,0.0047]$, and $[-0.0030,0.0023]$, respectively. All three eICU intervals include zero after adjustment for six comparisons. The XGBoost reversal is therefore a point-estimate disagreement, not established population harm; its pooled gain is itself not distinguishable from zero (Table~\ref{tab:core}). These intervals condition on fitted models and assignments. MIMIC-IV tests one unit per assignment, so pooled and within-site evaluation coincide: differences are $-0.0313$, $-0.0183$, and $-0.0287$. Its six units provide a small, within-institution sensitivity analysis, not an independent test of hospital shift.

\paragraph{Weighting versus cancellation.}
Table~\ref{tab:aggregation} applies the decomposition of Section~\ref{sec:aggregation} to a fixed set of hospitals in each assignment, those containing both selected groups (at most three are excluded). Its first column is ordinary pooling; the others use common weights and aggregate before ($A_w$) or after ($B_w$) taking each hospital's worst group. For XGBoost, the difference turns from $-0.0052$ to $+0.0019$ as soon as pooled weights are replaced by common patient weights, before the aggregation order changes: the reversal is a weighting effect, and the covariance in \eqref{eq:cov} reproduces it exactly (Table~\ref{tab:covariance}). For LR and MLP, the sign survives reweighting, and the change comes from the aggregation order. Moving from $A_w$ to $B_w$ shrinks the LR difference from $-0.0178$ to $-0.0072$ and the MLP difference from $-0.0077$ to $-0.0007$, almost entirely through cancellation ($+0.01135$ and $+0.00631$); worst-group switching is small. Equal weights give the same conclusions (Appendix~\ref{app:clinical-decomposition}).

\begin{table}[t]
\caption{Missingness minus pooled gap in eICU on matched hospitals, averaged across assignments; negative values favor missingness. Pre: signed errors aggregated before the absolute maximum ($A_w$). Post: each hospital's gap computed first ($B_w$). Group weights: ordinary patient pooling within each group. All columns use the same hospitals within an assignment.}
\label{tab:aggregation}
\centering\footnotesize\setlength{\tabcolsep}{3pt}
\input{tables/aggregation_table.tex}
\end{table}

\subsection{Do the gains extend to outcome labels?}

Coverage can also differ sharply between outcomes. For eICU XGBoost, missingness calibration covers only 16.1\% of patients who died, meaning that their sets contain death, despite overall coverage of 0.895; with mortality below 11\%, errors can concentrate on deaths while overall coverage stays near target. This low value depends on the probability scale rather than on risk ranking: multiplying predicted odds by $1/4$ or 4 leaves rankings and AUROC unchanged but yields death coverage of 0.024 or 0.593 after recalibration, while overall coverage stays between 0.894 and 0.896. Symmetric temperature changes, in contrast, leave every set unchanged (Appendix~\ref{sec:invariance}). Calibrating risk and outcome jointly raises death coverage to 0.908 in eICU and 0.906 in MIMIC-IV, but returns both labels for 63.7\% and 61.1\% of patients, respectively (Appendix~\ref{app:outcome-detail}).

\section{Discussion and conclusion}

Missingness-aware calibration reduces the average gap on its selected groups in all six settings. Risk grouping nevertheless has smaller gaps on the broader panel, including after restricting group sizes. Within eICU hospitals, the gain shrinks for every predictor: the XGBoost point comparison reverses through weighting, whereas the LR and MLP gains are largely removed by cancellation.

These findings do not argue against missingness-aware calibration. When missingness groups are the population that must be covered, it guarantees their coverage under exchangeability, which risk grouping does not, and it improves their pooled coverage in every setting we studied. What the findings argue against is inferring site-level reliability from a pooled improvement. Because the decomposition does not depend on how groups are defined, and the population construction shows that aggregation alone can reverse rankings, the same caution applies to any calibration partition whose group proportions vary across sites. Multi-site evaluations should therefore report the audited groups, hospital weights, and aggregation order, alongside outcome coverage and prediction-set composition.

\paragraph{Limitations.}
The study is retrospective and uses information unavailable at admission. Its coverage bounds assume independent calibration observations, not within-hospital dependence, and the site-bootstrap intervals condition on fixed models and assignments. The clinical reversal is not statistically significant, and MIMIC-IV covers units of one institution. The selector is a heuristic; the conditional comparator covers one predictor and two bases. We do not benchmark CP-MDA \citep{Zaffran2023}, which masks inputs and recomputes predictions rather than recalibrating fixed scores; our comparisons are therefore limited to methods that keep the prediction pipeline fixed. Clinical usefulness also requires a policy for empty and ambiguous sets.

\section*{AI-use statement}
AI tools assisted with writing and grammar checks. The authors are responsible for verifying the results, citations, and final manuscript.

\section*{Ethics statement}
This retrospective study uses clinical databases that require credentialed access and impose data-use restrictions; access was obtained through the required credentialing process. We do not release individual patient data, identifiers, or predictions. Missingness can reflect care processes and unequal access to measurements. Calibrating a missingness group neither establishes demographic fairness nor justifies withholding measurements. Clinical deployment and patient benefit are not evaluated, and empty prediction sets would require a separate operational policy. Data-access authorization is distinct from ethics review; no study-specific institutional approval or exemption is claimed.

\section*{Reproducibility statement}
Section~\ref{sec:theory} states the assumptions and calibration rules, Section~\ref{sec:aggregation} states and proves the aggregation results, and Appendix~\ref{app:proof} proves the coverage bound. The appendices specify preprocessing, model settings, selection, weighting, uncertainty summaries, and all controlled distributions. Individual clinical records and predictions are withheld under the original access restrictions, so independent reproduction requires authorized access to the databases.

\bibliography{references}
\bibliographystyle{plainnat}

\clearpage
\input{appendix/supplement}
\end{document}

%% file: tables/core_table.tex
\begin{tabular}{llrrrrrl}
\toprule
Data & Model & Pooled & Missing & Risk (2) & Risk (3) & Label & Gain [95\% interval], pp \\
\midrule
eICU & LR & 0.040 & 0.021 & 0.018 & 0.014 & 0.040 & 1.91 [1.73, 3.17] \\
eICU & XGB & 0.029 & 0.026 & 0.012 & 0.009 & 0.030 & 0.34 [-0.08, 1.07] \\
eICU & MLP & 0.029 & 0.017 & 0.017 & 0.014 & 0.029 & 1.17 [1.09, 1.94] \\
MIMIC-IV & LR & 0.083 & 0.052 & 0.053 & 0.038 & 0.086 & 3.13 [1.76, 4.52] \\
MIMIC-IV & XGB & 0.060 & 0.041 & 0.027 & 0.026 & 0.090 & 1.83 [1.33, 2.46] \\
MIMIC-IV & MLP & 0.058 & 0.029 & 0.039 & 0.038 & 0.059 & 2.87 [1.83, 3.97] \\
\bottomrule
\end{tabular}

%% file: tables/aggregation_table.tex
\begin{tabular}{lrrrrr}
\toprule
Model & Group weights & Patient/pre & Patient/post & Equal/pre & Equal/post \\
\midrule
LR & -0.0208 & -0.0178 & -0.0072 & -0.0159 & -0.0046 \\
XGB & -0.0052 & +0.0019 & +0.0020 & +0.0028 & +0.0023 \\
MLP & -0.0124 & -0.0077 & -0.0007 & -0.0066 & -0.0004 \\
\bottomrule
\end{tabular}

%% file: appendix/supplement.tex
\appendix

\section{Terminology and notation}
\label{app:notation}

This section collects the terms used in the main text and the appendices.

\paragraph{Prediction sets and coverage.}
For a record with features $(X,M)$ and fitted death probability $f(Z)$, the nonconformity score of a candidate label $y\in\{0,1\}$ is $S(X,M,y)=y\{1-f(Z)\}+(1-y)f(Z)$, one minus the predicted probability of $y$. Given a threshold $q$ estimated from calibration data, the prediction set is $\Gamma=\{y:S(X,M,y)\le q\}$. The set contains the true outcome if and only if the \emph{true-label score} $S=S(X,M,Y)$ satisfies $S\le q$, so every coverage statement below is a statement about the distribution of $S$ relative to $q$. A set can be empty ($\varnothing$), a singleton ($\{0\}$ predicts survival and $\{1\}$ predicts death), or ambiguous ($\{0,1\}$).

\paragraph{Sites, roles, and assignments.}
A site is a hospital in eICU or a care unit in MIMIC-IV. An \emph{assignment} partitions sites into four disjoint roles: fitting the predictor, selecting the grouping, estimating calibration thresholds, and testing. There are ten eICU and 18 MIMIC-IV assignments; with three predictors, this gives $28\times3=84$ fitted predictors. Assignments share sites, so they are not independent replications. Unless stated otherwise, each quantity is computed within an assignment and then averaged across assignments.

\paragraph{Predictors.}
LR is logistic regression. XGBoost is an ensemble of gradient-boosted decision trees \citep{Chen2016}. The MLP (multilayer perceptron) is a feedforward neural network with one hidden layer. Because a tree ensemble returns finitely many distinct probabilities, its scores can be tied. The area under the receiver operating characteristic curve (AUROC) is the probability that a randomly chosen death receives a higher predicted risk than a randomly chosen survivor; it depends only on the ranking of predicted risks.

\paragraph{Evaluation quantities.}
For a group $g$ of test patients, $\widehat C_g$ is the fraction whose prediction set contains the true outcome. The \emph{selected gap} is $\max_g|\widehat C_g-0.9|$ over the two states of the selected measurement, and the \emph{panel gap} is the same maximum over both states of every eligible measurement. Over whichever groups a table specifies, $U=\max_g(0.9-\widehat C_g)_+$ is the largest shortfall and $O=\max_g(\widehat C_g-0.9)_+$ the largest excess, with $(x)_+=\max(x,0)$. We write $C$ for overall coverage, $C_0$ and $C_1$ for coverage among survivors and among deaths, and $\mu=\Prb(Y=1)$ for mortality.

\section{Coverage bounds}
\label{app:proof}

\subsection{Setting and result}

We condition throughout on the fitted predictor, the preprocessing map, and the group map $G$, all of which are independent of the calibration sample. Let $F$ be the cumulative distribution function (CDF) of the true-label score $S$ in the calibration population, $F_k$ its CDF within group $k$, and $H_k$ its CDF within group $k$ in the deployment population. For groups with positive probability in both populations, define the Kolmogorov distances
\begin{equation}
 \eta_k=\sup_t|F_k(t)-F(t)|,\qquad
 \delta_k=\sup_t|H_k(t)-F_k(t)|.
\end{equation}
The first measures how far group $k$'s calibration scores differ from the pooled calibration scores; the second measures how far they shift between calibration and deployment. Neither is determined by missingness frequencies. To allow ties, let $a=\sup_t\Prb_{\rm cal}(S=t)$ and $a_k=\sup_t\Prb_{\rm cal}(S=t\mid G=k)$ be the largest probability masses at a single score value (atoms) for pooled scores and for group $k$.

\begin{proposition}[Expected coverage with possible ties]
\label{prop:coverage}
Suppose the $n$ calibration examples are independent and identically distributed (iid), and the deployment example is independent of them. Pooled calibration satisfies
\begin{equation}
 \left|\Prb\{Y_{\rm te}\in\Gamma_{\rm pool}\mid G_{\rm te}=k\}-(1-\alpha)\right|
 \le \eta_k+\delta_k+\frac{1}{n+1}+a.
 \label{eq:pool}
\end{equation}
Let $G_{1:n}$ denote the group labels of all calibration examples. Conditional on these labels, for $n_k\ge1$, groupwise calibration satisfies
\begin{equation}
 \left|\Prb\{Y_{\rm te}\in\Gamma_{\rm group}\mid G_{\rm te}=k,G_{1:n}\}-(1-\alpha)\right|
 \le \delta_k+\frac{1}{n_k+1}+a_k.
 \label{eq:group}
\end{equation}
Here $\Gamma_{\rm pool}$ and $\Gamma_{\rm group}$ are the sets produced by the two rules, and the probabilities average over calibration scores and the independent test example. For continuous score distributions, the atom terms vanish.
\end{proposition}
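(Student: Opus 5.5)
The argument is the usual split-conformal proof, applied twice: once to the pooled quantile and once to a group's own quantile. Each time the coverage probability is split into two pieces. The first is the probability under the calibration law at the same threshold. The second is a CDF-shift term.

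Write $\widehat q$ for the threshold, so coverage equals $\Prb(S_{\rm te}\le\widehat q)$. The threshold $\widehat q$ is a function of the calibration scores only, and the test example is independent of them. Conditioning on the calibration data therefore gives
\[
\Prb\{Y_{\rm te}\in\Gamma\mid G_{\rm te}=k\}=\E\{H_k(\widehat q)\}.
\]
For the pooled rule we decompose
\[
H_k(\widehat q)=\{H_k(\widehat q)-F_k(\widehat q)\}+\{F_k(\widehat q)-F(\widehat q)\}+F(\widehat q).
\]
The first bracket is bounded in absolute value by $\delta_k$ and the second by $\eta_k$, pointwise in $\widehat q$. When $\widehat q=+\infty$ every CDF equals one, so all differences vanish.

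It remains to show $|\E F(\widehat q)-(1-\alpha)|\le 1/(n+1)+a$, where $\widehat q=S_{(r)}$ and $r=\lceil(n+1)(1-\alpha)\rceil$. I would use the probability integral transform with randomized tie-breaking. Let $U_i=F(S_i^-)+V_i\{F(S_i)-F(S_i^-)\}$ with $V_i$ independent uniforms; then the $U_i$ are iid $\mathrm{Unif}(0,1)$. The map $s\mapsto U$ is monotone, so $U_{(r)}$ corresponds to $S_{(r)}$, with $F(S_{(r)})-a\le U_{(r)}\le F(S_{(r)})$. Also $\E U_{(r)}=r/(n+1)$, which lies in $[1-\alpha,\,1-\alpha+1/(n+1))$ because of the ceiling. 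This yields $1-\alpha\le \E F(\widehat q)\le 1-\alpha+1/(n+1)+a$.

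When $r>n$ we have $\widehat q=+\infty$ and coverage is one. The excess over $1-\alpha$ is then below $1/(n+1)$, since $(n+1)(1-\alpha)>n$. Combining the three bounds gives \eqref{eq:pool}.

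For the groupwise rule we condition on $G_{1:n}$. Given these labels, the $n_k$ calibration scores in group $k$ are iid with CDF $F_k$. This follows because the pairs $(G_i,S_i)$ are iid and conditioning on the labels leaves the group-$k$ scores iid from the conditional law. The threshold $\widehat q_k$ depends only on these scores, and the test point is still independent of them. The same decomposition then applies without the $\eta_k$ term, now with $H_k$ versus $F_k$ and with the order-statistic argument run on the $n_k$ scores and the atom size $a_k$. This gives \eqref{eq:group}.

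For continuous distributions there are no atoms, so $a=a_k=0$ and the atom terms vanish.

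The main obstacle is the tie step: handling the atoms cleanly so that the one-sided error is exactly $a$ (respectively $a_k$). It also requires checking that $U_{(r)}$ corresponds to $F$ at the order statistic rather than at a neighbouring atom. I would handle this with the randomized transform above, together with the observation that $F(S_{(r)})-U_{(r)}$ is at most the jump of $F$ at $S_{(r)}$. The remaining steps are routine bookkeeping with Kolmogorov distances.
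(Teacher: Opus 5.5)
Your proposal is correct and is essentially the paper's proof: the same pointwise split $|H_k(\widehat q)-F(\widehat q)|\le\delta_k+\eta_k$ (with $\eta_k$ dropped after conditioning on $G_{1:n}$ for the groupwise rule), combined with the same randomized probability-integral-transform lemma giving $1-\alpha\le\E F(\widehat q)\le 1-\alpha+1/(N+1)+a$, including the separate case $r>N$. The tie check you flag goes through. Only scores with $S_i<S_{(r)}$ can produce transformed values strictly below $F(S_{(r)}-)$, and there are at most $r-1$ of them. At least $r$ transformed values lie at or below $F(S_{(r)})$. Hence $U_{(r)}\in[F(S_{(r)}-),F(S_{(r)})]$, as required.
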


Separate calibration removes the pooling term $\eta_k$ at the cost of a larger sample-size term, $1/(n_k+1)$ rather than $1/(n+1)$. Neither calibration rule controls the shift term $\delta_k$. Comparing the two bounds does not rank actual coverage errors, and the bounds concern expected coverage: if $C_k$ denotes coverage conditional on the realized calibration data, the proposition bounds $|\E C_k-(1-\alpha)|$, not $\E|C_k-(1-\alpha)|$. Appendix~\ref{app:bands} gives simultaneous bounds for realized coverage.

With two groups, write $p=\Prb_{\rm cal}(G=1)$ and $d_F=\sup_t|F_1(t)-F_0(t)|$. Since $F=(1-p)F_0+pF_1$, we have $\eta_0=pd_F$ and $\eta_1=(1-p)d_F$. Different missingness rates therefore need not imply a large pooling term: if the mask is independent of the score, $\eta_0=\eta_1=0$ whatever the rates. The selector of Section~\ref{sec:theory} targets variation in missingness rates across sites, which is neither necessary nor sufficient for large $\eta_k$.

Because thresholds use inclusive comparisons ($S\le q$), ties do not weaken the lower bound, but they can produce overcoverage, which the atom terms capture. Tree ensembles produce such ties, so we retain atom terms rather than assume continuous scores. The fraction of repeated scores in a sample is not a reliable estimate of the largest population atom.

\subsection{Proofs}

\begin{lemma}
Let $S_1,\ldots,S_N$ be iid with CDF $F$ and largest atom $a$. For $\tau=1-\alpha$, let $r=\lceil(N+1)\tau\rceil$ and let $q=S_{(r)}$ if $r\le N$, otherwise $q=+\infty$. For $N\ge1$,
\begin{equation}
 \tau\le \E F(q)\le\tau+\frac{1}{N+1}+a.
\end{equation}
\end{lemma}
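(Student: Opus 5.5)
The plan is to reduce both bounds to statements about uniform random variables via the probability integral transform, then account for ties separately. Define $F^-(t)=\Prb(S<t)$, so that $F(t)-F^-(t)\le a$ for every $t$. Throughout, if $r>N$ then $q=+\infty$ and $F(q)=1$. The upper bound then holds trivially. For the lower bound, $r>N$ means $(N+1)\tau>N$, so $\tau>N/(N+1)$, and $\tau<1$ holds anyway. The main work is therefore the case $r\le N$.

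\emph{Lower bound.} Take a fresh $S_{N+1}$ from $F$, independent of $S_1,\ldots,S_N$. Then $\E F(q)=\Prb(S_{N+1}\le S_{(r)})$. This is the standard split-conformal argument: by exchangeability of $S_1,\ldots,S_{N+1}$, the event $S_{N+1}\le S_{(r)}$ contains the event that the rank of $S_{N+1}$ among all $N+1$ values is at most $r$, with ties broken uniformly at random. That rank is uniform on $\{1,\ldots,N+1\}$, so the probability is at least $r/(N+1)\ge\tau$. Ties only help, because the comparison is inclusive.

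\emph{Upper bound.} Write $F(q)\le F^-(q)+a$. It then suffices to show $\E F^-(S_{(r)})\le r/(N+1)$, since $r/(N+1)<\tau+1/(N+1)$ by the definition of the ceiling. I would use the randomized PIT: let $U_i=F^-(S_i)+V_i\{F(S_i)-F^-(S_i)\}$, where the $V_i$ are independent uniforms. Then the $U_i$ are iid Uniform$(0,1)$, and $F^-(S_i)\le U_i$ holds pointwise. The transform is monotone in $S$, so the order statistics correspond: we can pick the $r$th order statistic of $U$ so that it sits at an index $i$ with $S_i=S_{(r)}$. Hence $F^-(S_{(r)})\le U_{(r)}$, and $\E U_{(r)}=r/(N+1)$. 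Combining the pieces gives $\E F(q)\le\tau+1/(N+1)+a$.

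The main obstacle is making the order-statistic correspondence rigorous under ties. The transform is only weakly monotone in $S$, and within a tied block the $V_i$ reorder the $U$'s. I would handle this by noting two facts. First, among indices with $S_i=s$ all $U_i$ lie in $[F^-(s),F(s)]$. Second, $S_i<S_j$ implies $U_i\le U_j$. Together these imply that the $r$th smallest $U$ lies in the block of the $r$th smallest $S$, so $U_{(r)}\ge F^-(S_{(r)})$. An alternative I would keep in reserve is to bound $\Prb(F^-(S_{(r)})>u)\le\Prb(\#\{i:U_i\le u\}<r)$ directly and integrate over $u$.
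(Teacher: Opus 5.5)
Your proof is correct. Your upper bound follows the paper's argument: the same randomized probability integral transform, the correspondence $F^-(S_{(r)})\le U_{(r)}$, and $\E U_{(r)}=r/(N+1)$. Your two facts make rigorous the tie handling that the paper states in one sentence. The first is that values within a tied block lie in $[F^-(s),F(s)]$. The second is that $S_i<S_j$ implies $U_i\le F(S_i)\le F^-(S_j)\le U_j$.

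Your lower bound takes a different route. You use the classical conformal rank argument with an auxiliary test score $S_{N+1}$. The paper instead reads both bounds off the single sandwich $W_{(r)}\le F(S_{(r)})\le W_{(r)}+a$. Your block argument already places $U_{(r)}$ in $[F^-(S_{(r)}),F(S_{(r)})]$, which gives $U_{(r)}\le F(S_{(r)})$. The lower bound $\E F(q)\ge r/(N+1)\ge\tau$ therefore follows directly from the transform you built. The rank argument is more elementary, does not need the transform, and shows why inclusive comparisons only help under ties. In your write-up, however, it is redundant.

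There is one small slip in the case $r>N$. There the \emph{lower} bound is trivial, since $F(q)=1\ge\tau$. The \emph{upper} bound needs $\tau+1/(N+1)>1$, which is the inequality $\tau>N/(N+1)$ that you derived but placed under the wrong heading.
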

\begin{proof}
Introduce independent $V_i\sim\mathrm{Uniform}(0,1)$ solely for the proof and set
\[
 W_i=F(S_i-)+V_i\{F(S_i)-F(S_i-)\},
\]
where $F(S_i-)$ is the left limit of $F$ at $S_i$. This randomized probability integral transform makes the $W_i$ iid uniform on $(0,1)$. It preserves the order of unequal scores and spreads tied scores across the corresponding jump of $F$, whose height is at most $a$. Consequently, for $r\le N$,
\[
 W_{(r)}\le F(S_{(r)})\le W_{(r)}+a.
\]
Taking expectations and using $\E W_{(r)}=r/(N+1)$ proves the result, since $\tau\le r/(N+1)<\tau+1/(N+1)$. If $r>N$, then $F(q)=1$ and $\alpha<1/(N+1)$, so the bounds still hold. The implemented procedure uses inclusive comparisons and no randomization; the uniforms are only a proof device.
\end{proof}

\begin{proof}[Proof of Proposition~\ref{prop:coverage}]
For pooled calibration with realized threshold $q$, coverage in group $k$ at deployment is $H_k(q)$. Pointwise,
\[
 |H_k(q)-F(q)|\le |H_k(q)-F_k(q)|+|F_k(q)-F(q)|\le\delta_k+\eta_k.
\]
Averaging over calibration data, applying the lemma to $F$, and using the triangle inequality gives \eqref{eq:pool}. For groupwise calibration, condition on the calibration group labels $G_{1:n}$. Because $G$ is fixed and the observations are iid, the $n_k$ scores in group $k$ are then iid from $F_k$, while the independent deployment score has CDF $H_k$. Applying the lemma to $F_k$ and bounding $|H_k-F_k|$ by $\delta_k$ gives \eqref{eq:group}. An empty group receives the full prediction set and coverage one; we treat this case separately rather than as a calibrated estimate.
\end{proof}

If a group's score distribution does not change at deployment, then $\delta_k=0$ and the group attains the usual expected coverage bound, including terms for sample size and ties. Under arbitrary shift, $\delta_k$ can make the bound uninformative.

\subsection{Simultaneous bounds for realized group coverage}
\label{app:bands}

Fix $0<\beta<1$ and condition on the calibration group labels and all fitted objects, with $K$ fixed groups. The $r$th order statistic of $N$ iid uniforms has a $\operatorname{Beta}(r,N+1-r)$ distribution. For every group with $1\le r_k\le n_k$, let $b_k^-$ and $b_k^+$ be the $\beta/(2K)$ and $1-\beta/(2K)$ quantiles of $\operatorname{Beta}(r_k,n_k+1-r_k)$. Then, with probability at least $1-\beta$ over calibration scores, simultaneously for these groups,
\begin{equation}
 \max\{0,b_k^--\delta_k\}\le C_k\le
 \min\{1,b_k^++a_k+\delta_k\},\qquad C_k=H_k(q_k).
 \label{eq:bands}
\end{equation}
Groups with $q_k=+\infty$, including empty groups, have $C_k=1$ and need no interval.

To prove this, apply the lemma's transform within group $k$: its $r_k$th uniform order statistic lies in $[b_k^-,b_k^+]$ except with probability $\beta/K$. On this event, $b_k^-\le F_k(q_k)\le b_k^++a_k$, and the shift bound adds $\delta_k$ on each side. A union bound over at most $K$ groups proves \eqref{eq:bands}; independence across groups is not needed. Although auxiliary uniforms establish the event, the intervals depend only on calibration scores, so the failure probability remains bounded after integrating out the uniforms.

For continuous scores without shift, $C_k$ itself has this beta distribution, with variance
\[
 \operatorname{Var}(C_k)=
 \frac{r_k(n_k+1-r_k)}{(n_k+1)^2(n_k+2)}.
\]
At target 0.9 and $n_k=1000$, its standard deviation is about 0.0095, whereas the bias term $1/(n_k+1)$ is about 0.0010; sampling variability in realized coverage therefore dominates. The simultaneous intervals also bound the largest absolute deviation over the calibrated groups by their furthest endpoint from the target. They do not cover arbitrary overlapping audit groups, finite-test estimation error, or dependence within hospitals, and the unknown shift and atom terms can make them uninformative. This is a standard order-statistic consequence, not a new guarantee specific to missingness.

\section{Pooled and within-site aggregation}
\label{app:hospital-mechanism}

We use the notation of Section~\ref{sec:aggregation}. For a calibration method and fixed site weights $w$, $A$ averages signed coverage errors across sites before taking the largest absolute value over groups, $B$ takes each site's largest absolute error before averaging, and $D$ lies between them. By Proposition~\ref{prop:aggregation}(iii), $A\le D\le B$, and $B-A=(D-A)+(B-D)$ separates cancellation of signed errors from worst-group switching across sites.

\subsection{Population example}

To separate these effects from sampling noise, we use a population with known score distributions. There are two hospitals $h\in\{0,1\}$ and two mask states $g\in\{0,1\}$. In the calibration population, the four cells have equal probability, and the true-label score in cell $(h,g)$ is uniform on $[0,u_{hg}]$, with $u=\left(\begin{smallmatrix}1&0.5\\0.7&1.1\end{smallmatrix}\right)$. Pooled calibration uses the 0.9 quantile of the mixture of all four cells (0.838); mask calibration uses the 0.9 quantile within each mask state, mixing over hospitals (0.80 for $g=0$ and 0.88 for $g=1$). Using population quantiles removes calibration sampling error. These generic scores are not generated by the binary mortality model, and the example is not a model of the clinical cohorts.

At deployment, the hospitals receive weights $(w_0,1-w_0)$ with $w_0\in\{0.5,0.8\}$, and the mask states remain equally likely within each hospital. Independently, the score distribution in cell $(1,1)$ is either unchanged or shifted upward by 0.12, becoming uniform on $[0.12,1.22]$; the other cells are unchanged. The setting $w_0=0.5$ without shift reproduces the calibration population. Coverage in each cell is computed exactly from the uniform CDFs, and $A$, $D$, and $B$ use the deployment weights.

\begin{table}[ht]
\caption{Exact population aggregation quantities at nominal coverage 0.9. $w_0$ is the first hospital's deployment weight. Shift applies only to the second hospital's second mask state. These are population values, not estimates from simulated samples.}
\label{tab:hospital-population}
\centering\footnotesize\setlength{\tabcolsep}{3pt}
\input{tables/hospital_table.tex}
\end{table}

Without any deployment change, mask calibration attains exact coverage in both mask states, so $A=0$. Yet hospital 0 has coverage 0.80 in state 0 and 1.00 in state 1, and hospital 1 the reverse, so each hospital has a gap of 0.1 and $B=0.1$: exact group coverage coexists with errors in every hospital. Changing only the hospital weights to $w_0=0.8$ makes $A$ favor pooling while $B$ favors mask calibration. The within-cell shift changes coverage but not these rankings. Since $B-D=0$ throughout, the nonlinear effect in this example is cancellation, not worst-group switching.

Finite samples can obscure these comparisons. We also draw 25, 100, or 1,000 test observations per cell in 1,000 repetitions, evaluating both methods on the same draws. In the no-shift, equal-weight reference, the population difference in $A$ (mask minus pooling) is $-0.0190$. With 25 observations per cell, the empirical difference has mean $-0.0038$ (SD $0.0215$), and its sign is wrong or tied in 64.2\% of repetitions; with 1,000 observations per cell, this fraction falls to 1.9\%. The mean is pulled toward zero because the empirical $A$ is the absolute value of a noisy average and is biased upward, most strongly for mask calibration, whose population value is zero. These results demonstrate possible aggregation and sampling effects, not the causes of the clinical results in eICU.

\subsection{Clinical decomposition}
\label{app:clinical-decomposition}

The clinical decomposition in Section~\ref{sec:experiments} is computed within each eICU assignment over its test hospitals and then averaged across assignments. Patient weights are proportional to each hospital's number of test patients, $w_h=n_h/n$; equal weights give each hospital weight $1/H$. The decomposition excludes a hospital when either selected mask state is absent, so both methods use the same hospitals. This differs from the within-hospital comparison in Section~\ref{sec:experiments}, which first averages each unique hospital's repeated evaluations and, for a hospital lacking one state, uses the available state; an empty cell is treated as undefined, never as zero coverage. Across the ten eICU assignments, there are nine empty hospital--state cells and 22 cells with fewer than 40 patients, so small denominators can add substantial noise.

Table~\ref{tab:clinical-decomposition} reports paired differences, with $\Delta$ denoting missingness minus pooling. The contributions add to $\Delta B-\Delta A$ and describe arithmetic differences, not causal hospital processes. With patient weights, cancellation contributes $+0.01135$ for LR and $+0.00631$ for MLP, and worst-group switching contributes $-0.00076$ and $+0.00071$; neither comparison changes sign. For XGBoost, both nonlinear contributions are near zero ($+0.00011$ and $+0.00002$), and the sign change occurs in the weighting step. Equal weights give the same pattern.

\begin{table}[ht]
\caption{Paired missingness-minus-pooling differences in the clinical hospital decomposition. Means over ten eICU assignments, with the same complete hospitals and weights for both methods.}
\label{tab:clinical-decomposition}
\centering\footnotesize\setlength{\tabcolsep}{4pt}
\input{tables/clinical_decomposition.tex}
\end{table}

\paragraph{Covariance in the XGBoost weighting reversal.}
For each eICU assignment, we compute the covariance $\operatorname{Cov}_w(p_{hg}/\bar p_g,e_{hg})$ of Proposition~\ref{prop:aggregation}(ii) for both selected mask states using patient weights. Table~\ref{tab:covariance} reports signed means across assignments. Adding each covariance to the corresponding common-weight signed error reproduces the pooled signed error exactly. After taking absolute maxima within each assignment, the mean weighting term is $\Delta A_w-\Delta\gap_{\rm pool}=+0.00708$, which moves the comparison from $-0.00515$ to $+0.00192$. Because the maximum is nonlinear, this weighting term is not the difference between the two mean covariances.

\begin{table}[ht]
\caption{Mean signed covariance terms for eICU XGBoost over ten assignments. State 0 is observed and state 1 is missing. Difference is missingness minus pooled calibration; entries are in coverage-probability units.}
\label{tab:covariance}
\centering\footnotesize
\input{tables/covariance_table.tex}
\end{table}

\section{Implementation details}
\label{app:implementation}

\paragraph{Preprocessing.}
We exclude identifiers, outcomes, fields derived at discharge, and supplied mortality probabilities. Features that are constant or entirely missing in the training data are removed. Categorical variables are coded as indicator (one-hot) variables with levels learned from training data; missing and unseen categories share a dedicated indicator. XGBoost handles missing numeric values internally; for LR and the MLP, they are replaced with zero before standardization with training-data means and standard deviations. Group indicators are computed from the raw records before this imputation, and overall missingness is the fraction of raw predictors that are missing.

\paragraph{Predictors.}
Models use scikit-learn \citep{Pedregosa2011} and XGBoost \citep{Chen2016}. LR weights each class inversely to its frequency and is fitted with the liblinear solver for at most 1,000 iterations. XGBoost uses 64 trees of maximum depth three, learning rate 0.08, and random subsampling of 90\% of rows and columns. The MLP has one hidden layer of 64 units with activation $\max(0,x)$ and an $L_2$ penalty of $10^{-4}$, and is fitted with the Adam stochastic-gradient optimizer at learning rate 0.001. Training stops when the training loss fails to improve by $10^{-4}$ for ten consecutive epochs (passes through the data), with a cap of 2,000 epochs; all fits stop between 132 and 394 epochs. No hyperparameter or stopping rule is selected using test performance. Mean test AUROCs for LR, XGBoost, and MLP are 0.879, 0.884, and 0.788 in eICU and 0.866, 0.873, and 0.783 in MIMIC-IV. The MLP discriminates less well than the other predictors, but the principal findings do not rely on it: XGBoost supplies the weighting reversal, and LR also shows substantial attenuation through cancellation (Table~\ref{tab:clinical-decomposition}).

\paragraph{Sensitivity to the training cap.}
With a cap of 200 epochs, all ten eICU and two MIMIC-IV MLP fits stop before meeting the stopping rule. Raising the cap, with architecture, seeds, data order, and preprocessing unchanged, lets all 12 fits meet the rule within 225--394 epochs; all reported MLP results use these converged fits. Under the 200-epoch cap, the mean within-hospital MLP difference is $+0.0024$ rather than $-0.0004$, so the sign of this small difference is sensitive to optimization. Meeting the stopping rule does not establish a global minimum.

\paragraph{Selection and controls.}
eICU candidates include measurements from the first hour and first day, plus physiological variables from the Acute Physiology and Chronic Health Evaluation (APACHE) severity score. MIMIC-IV candidates include first-day minimum, maximum, and mean measurements and urine output. Institutional descriptors are excluded, and eligibility thresholds are fixed before testing. Ties in \eqref{eq:select} are broken by sorting variable names. The selector chooses urine output in five eICU assignments and maximum bicarbonate in the other five; MIMIC-IV choices span six measurement summaries. The random control draws one eligible variable uniformly in each assignment, and the clinical control uses availability of the first-day maximum lactate. Risk groups use the $1/3$ and $2/3$ quantiles of predicted risk in the selection sample, with ties assigned to the higher group. The score tree is a regression tree with at most three leaves, each containing at least 100 selection observations.

\paragraph{Mask weighting.}
For selected mask state $g$, let $N_g$ and $T_g$ be the calibration and target counts, with totals $N$ and $T$. The weight
\[
 v_g=\frac{(T_g+1)/(T+2)}{(N_g+1)/(N+2)}
\]
is a smoothed estimate of the ratio of target to calibration probabilities of state $g$. Calibration score $S_i$ receives weight $v_{G_i}$. For a test record in state $g$, the threshold is the smallest $t$ such that $\sum_i v_{G_i}\mathbf 1\{S_i\le t\}\ge(1-\alpha)(\sum_i v_{G_i}+v_g)$; equivalently, the test record's own weight is placed at $+\infty$, as in weighted conformal prediction \citep{Tibshirani2019}. This does not remove error in the estimated ratios or establish the assumptions needed under general covariate shift.

\paragraph{Conditional conformal comparator.}
The method of \citet{Gibbs2025} fits an augmented quantile regression of the score on a chosen function class and guarantees coverage under every covariate shift whose density ratio is a nonnegative function in that class. We use the \texttt{conditionalconformal} package (v0.0.5) with the linear span of a constant, the fitted probability, and overall missingness, in its deterministic (non-randomized) prediction mode at level 0.9, which is conservative. Calibration data and fitted predictions are shared with the other methods, and the 2,000 test patients per assignment are drawn without replacement.

\paragraph{Uncertainty.}
Table~\ref{tab:core} reports means and paired intervals for the pooled selected-gap improvement (Appendix~\ref{app:paired}). Figure~\ref{fig:paired} shows standard deviations across assignments, not confidence intervals. For the within-hospital comparison, we first average each hospital's repeated paired gap differences and then resample hospitals with replacement 10,000 times; six-comparison adjusted intervals use percentile levels $0.05/12$ and $1-0.05/12$. Training and calibration sites are held fixed, so these summaries do not include refitting uncertainty. All random seeds are fixed and provided with the code.

\subsection{Paired uncertainty for the headline improvement}
\label{app:paired}
The estimand is the mean across assignments of the pooled selected gap under pooled calibration minus that under missingness calibration. For eICU, each bootstrap replicate samples the 76 distinct test hospitals with replacement. A hospital receives the same multiplicity in every assignment and for both methods. We multiply its covered counts and denominators by this multiplicity, recompute each assignment's two pooled group coverages, take the maximum absolute deviation from 0.9, and only then average the paired improvements. This preserves pairing and repeated appearances without treating assignments as independent. One of 10,000 draws omitted all observations of a selected group in an assignment and was excluded.

Each MIMIC-IV assignment tests one unit. We therefore average the three paired improvements for each unit and resample the six unit averages. Let $\widehat\theta$ be the observed mean improvement and $\xi_u$ the $u$ quantile of its bootstrap replicates $\theta^*$. The 95\% basic interval is $[2\widehat\theta-\xi_{0.975},\,2\widehat\theta-\xi_{0.025}]$, and adjusted endpoints use $\xi_{1-0.05/12}$ and $\xi_{0.05/12}$ in the same way (Table~\ref{tab:paired-full}). Because the maximum is nonlinear, the bootstrap distribution can be shifted relative to $\widehat\theta$; the basic interval uses $\theta^*-\widehat\theta$ as a pivot and so accounts for this shift, but it does not remove the finite-sample bias of the maximum. These intervals are conditional, descriptive approximations: they neither refit predictors nor regenerate calibration samples, and six units cannot support strong inference about a wider population of hospitals.

\begin{table}[ht]
\caption{Paired improvement in pooled selected gap (pooled minus missingness), in percentage points. Basic intervals reflect bootstrap quantiles around the observed improvement. Adjusted endpoints use quantiles at $0.05/12$ and $1-0.05/12$ for six comparisons.}
\label{tab:paired-full}
\centering\footnotesize\setlength{\tabcolsep}{3pt}
\input{tables/paired_intervals.tex}
\end{table}

\subsection{Sensitivity to evaluation-group size}
\label{app:panel-size}
We retain the raw panel gap as the primary metric. To check whether its comparison is driven by small groups, we repeat it after requiring each evaluated group to contain at least 500 or 1,000 test patients. Each cutoff selects the same groups for every method within an assignment and is not tuned on performance; calibration is unchanged.

We also report $\max_g |\widehat C_g-0.9|/\sqrt{0.9(0.1)/n_g}$ over nonempty panel groups, which divides each deviation by its nominal binomial standard error. This standardized error is a descriptive scale check, not a corrected coverage gap, a significance test, or a simultaneous confidence bound; it ignores within-hospital dependence and gives larger groups more weight. All six mean comparisons continue to favor risk terciles under both cutoffs and on this alternative scale (Table~\ref{tab:panel-size}).

\begin{table}[ht]
\caption{Missingness minus risk-tercile panel error. Positive entries favor risk calibration. The first three columns are raw absolute gaps with the stated minimum group size; the last uses the nominal standardized error, in different units. Means across assignments.}
\label{tab:panel-size}
\centering\footnotesize\setlength{\tabcolsep}{4pt}
\input{tables/panel_sensitivity.tex}
\end{table}

\clearpage
\section{Additional results}
\label{app:tables}

All analyses reuse the 84 fitted predictors without test-based tuning. The table below reports overall coverage, average set size, selected gap, and panel gap for every calibration rule. The score-tree and mask-weighting rows are our stated controls, not implementations of other published methods for missing data. Risk (2) splits selection-sample risk at its median and applies the same conformal quantile within each group; it matches the number of thresholds of missingness calibration, not its group sizes. Risk + label is the joint rule of Appendix~\ref{app:outcome-detail}. Appendix~\ref{app:pilot} reports an exploratory extension with learned thresholds.
\begingroup\footnotesize\setlength{\tabcolsep}{4pt}
\input{tables/all_methods.tex}
\endgroup

\subsection{Coverage diagnostics}
\label{app:diagnostics}

The maximum panel gap is an empirical diagnostic, not a precise estimate of the largest population error: selecting the worst of many overlapping groups inflates it and adds uncertainty. We do not give patient-level iid confidence intervals for these clustered clinical data, or claim simultaneous empirical validity from the iid bound of Appendix~\ref{app:bands}.

\paragraph{Which patients receive different sets?}
\label{app:set-difference}
We compare the label-inclusion decisions of pooled and missingness calibration on identical test patients; a patient counts as changed if either label's inclusion differs. Table~\ref{tab:set-difference} averages this fraction across assignments; its range describes variation across assignments, not uncertainty. For eICU XGBoost, the mean is 1.03\% (range 0.01--3.91\%), or 1.04\% when assignments are weighted by test size. Most sets are therefore unchanged, although group-specific coverage can differ.

\begin{table}[ht]
\caption{Test patients whose prediction sets differ between missingness and pooled calibration. Means and ranges are across ten eICU or 18 MIMIC-IV assignments.}
\label{tab:set-difference}
\centering\footnotesize
\input{tables/set_disagreement.tex}
\end{table}

\paragraph{Direction and outcome coverage.}
In the following table, $U$ and $O$ are computed over the two selected mask states, and the panel gap over all eligible measurements. Entries are means across assignments, not estimates for a new hospital.
\begingroup\footnotesize\setlength{\tabcolsep}{3pt}
\input{tables/diagnostic_table.tex}
\endgroup

\paragraph{Set composition.}
The four columns give the fractions of test patients receiving each type of set and sum to one before rounding. These proportions describe prediction behavior, not a clinical action policy.
\begingroup\footnotesize\setlength{\tabcolsep}{4pt}
\input{tables/composition_table.tex}
\endgroup

\paragraph{Conditional conformal comparison.}
Table~\ref{tab:aligned} reports the 2,000-patient comparison of Section~\ref{sec:experiments}. The extended basis $(1,f(Z),\overline M,M_{j^*})$ adds the selected mask indicator $M_{j^*}$ to the original basis, where $\overline M$ is overall missingness; all other settings and test patients are identical. This tests one prespecified change, not an optimized function class, and its results should not be compared with those on the full test samples.
\begin{table}[ht]
\caption{Conditional conformal prediction with and without the selected mask in its basis, XGBoost, identical 2,000-patient test samples. Gaps and shortfall use nominal coverage 0.9.}
\label{tab:aligned}
\centering\footnotesize\setlength{\tabcolsep}{4pt}
\input{tables/conditional_table.tex}
\end{table}

\clearpage
\section{Outcome coverage and score transformations}
\label{app:outcome-detail}

The analyses in this section use the original assignments and fitted predictors. The error allowances $\alpha\in\{0.05,0.1,0.2\}$, odds multipliers $\{1/4,1,4\}$, and joint calibration rule were fixed before these analyses, which were motivated by the preceding results and are exploratory; no other settings were tried. Overall and outcome-specific coverage are computed within each test population before averaging, and empty populations or labels are treated as undefined rather than as zero coverage.

\paragraph{Direction of errors.}
Absolute gaps do not distinguish undercoverage from conservatism. For eICU XGBoost, missingness and pooled calibration have the same mean selected shortfall $U$ to four decimals, 0.0169, but mean excesses $O$ of 0.0092 and 0.0159. In this setting, the smaller absolute gap reflects less overcoverage, not less undercoverage (Appendix~\ref{app:diagnostics}).

\paragraph{Death coverage.}
Let $C_y=\Prb(y\in\Gamma\mid Y=y)$ be coverage among patients with outcome $y$, where sets containing both labels count as covering, and let $\mu=\Prb(Y=1)>0$ be mortality in a fixed population. Since $C=(1-\mu)C_0+\mu C_1$ and $C_0\le1$,
\begin{equation}
 C_1\ge\max\{0,1-(1-C)/\mu\}.
\end{equation}
If $C\ge1-\alpha$, this gives $C_1\ge1-\alpha/\mu$, which is vacuous when $\mu\le\alpha$ even without shift. Mortality across eICU test assignments ranges from 0.081 to 0.109, and seven of ten assignments are below 0.1. Near-target overall coverage can therefore coexist with low death coverage. Tables~\ref{tab:prevalence} and~\ref{tab:population-outcomes} report denominators, mortality, and outcome coverage within the selected mask states.

\begin{table}[ht]
\caption{Population denominators and mortality ranges, including repeated assignment-specific cells. Mortality ranges exclude empty cells; denominator ranges retain them. $N_{\rm empty}$ counts cells with zero patients. Mask refers to the two selected states. Counts are shared by all predictors and include repeated evaluation of a hospital across assignments.}
\label{tab:prevalence}
\centering\footnotesize\setlength{\tabcolsep}{3pt}
\input{tables/prevalence_table.tex}
\end{table}

\begin{table}[ht]
\caption{Outcome coverage within the selected mask states at $\alpha=0.1$, missingness calibration. Unweighted means over nonempty cells; $C_y$ excludes cells without label $y$. Bound averages $\max\{0,1-(1-C)/\mu\}$ over cells with deaths, using each cell's actual coverage and mortality; it is descriptive, not a guarantee for each cell.}
\label{tab:population-outcomes}
\centering\footnotesize\setlength{\tabcolsep}{4pt}
\input{tables/population_outcomes.tex}
\end{table}

With missingness calibration, mean eICU XGBoost death coverage is 0.470, 0.161, and 0.044 at $\alpha=0.05$, 0.1, and 0.2, while overall coverage is 0.949, 0.895, and 0.795, and survivor coverage at $\alpha=0.1$ is 0.970 (Table~\ref{tab:alpha}). Low death coverage is therefore not confined to settings where the error allowance exceeds mortality. Predictors also differ: at $\alpha=0.1$, death coverage is 0.885 for LR and 0.448 for MLP. LR uses class weighting and XGBoost does not; similar ranking accuracy does not imply similar probability scales.

\begin{table}[ht]
\caption{Missingness calibration across coverage levels, recalibrated on the original calibration patients. $U$ is the largest shortfall below $1-\alpha$ across the measurement panel. Means across the ten eICU or 18 MIMIC-IV assignments.}
\label{tab:alpha}
\centering\footnotesize\setlength{\tabcolsep}{4pt}
\input{tables/alpha_table.tex}
\end{table}

\paragraph{Which probability transformations change the sets?}
\label{sec:invariance}
Risk ranking and score ranking are different objects. Writing $r=f(Z)$, we have $S(r,0)=r$ and $S(r,1)=1-r$, so calibration that pools both outcomes orders the true-label scores of survivors and deaths together. A transformation of the predicted probabilities can preserve the risk ranking of patients while changing this joint order of scores.

\begin{proposition}[Set invariance within fixed calibration cells]
\label{prop:invariance}
Fix all calibration and test cell memberships, the calibration observations, and the quantile ranks. Applying the same strictly increasing function $\phi$ to every calibration score and every candidate score compared within a cell leaves all label-inclusion decisions unchanged. This holds with tied scores under inclusive thresholds; cells assigned $+\infty$ continue to include every candidate.
\end{proposition}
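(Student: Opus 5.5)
The plan is to reduce every label-inclusion decision to a comparison between one candidate score and one calibration order statistic within a fixed cell, and then show that this comparison is preserved by $\phi$. Fix a cell $k$ with calibration scores $S_1,\ldots,S_{n_k}$ and quantile rank $r_k$. If $r_k>n_k$ or the cell is empty, the threshold is $+\infty$. Applying $\phi$ changes none of these memberships or ranks, so the threshold stays $+\infty$ and every candidate is still included; I would dispose of this case first. Otherwise the threshold is $q_k=S_{(r_k)}$, and a candidate $y$ for a test record in cell $k$ is included exactly when $S(X,M,y)\le q_k$.

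The first step is to show that a strictly increasing $\phi$ commutes with order statistics: $\phi(S)_{(r)}=\phi(S_{(r)})$. A strictly increasing map preserves the weak ordering of the multiset. Any sorting permutation for $(S_i)$ therefore also sorts $(\phi(S_i))$, and tied values remain tied. Hence the $r_k$th smallest transformed score is $\phi(q_k)$.

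The second step is the inclusion decision itself. For real $s,t$, strict monotonicity gives $s\le t\iff\phi(s)\le\phi(t)$. The forward direction follows from monotonicity. The reverse follows because $s>t$ would force $\phi(s)>\phi(t)$. Applying this with $s=S(X,M,y)$ and $t=q_k$ shows that inclusion under the transformed scores, $\phi(S(X,M,y))\le\phi(q_k)$, is equivalent to the original inclusion. Ties are handled automatically: equality is preserved in both directions, and the comparison is inclusive, which is exactly why the statement requires inclusive thresholds.

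The argument has to be applied cell by cell for every candidate label. In label-conditional or joint cells, the candidate label selects which cell's threshold is used. Since all memberships are held fixed by hypothesis, each decision is still one within-cell comparison of the form above.

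The only real subtlety, and the expected obstacle, is bookkeeping rather than analysis. The statement needs the same $\phi$ to be applied to both the calibration scores and the candidate scores compared within that cell. Cell memberships must also not depend on the transformed values. For instance, risk-tercile cut points must be held fixed even if the predicted probabilities are transformed, which the hypothesis provides. I would state explicitly that a transformation of $f(Z)$ which acts as a strictly increasing $\phi$ on the scores of one outcome, but not as the same $\phi$ across both outcomes within a pooled cell, falls outside the proposition. An odds multiplication is an example: it sends $r\mapsto\psi(r)$ but $1-r\mapsto1-\psi(r)$, which is not a single increasing function of the score. This is what distinguishes odds shifts from symmetric temperature changes.
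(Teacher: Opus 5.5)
Your proposal is correct and follows essentially the same argument as the paper: you handle infinite thresholds separately, note that a strictly increasing $\phi$ sends the rank-$r_k$ order statistic to $\phi(q_k)$, and use $s\le q_k \iff \phi(s)\le\phi(q_k)$ for each within-cell comparison. Your added remarks on ties, on fixed cell memberships, and on why an odds shift falls outside the hypothesis agree with the paper's surrounding discussion.
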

\begin{proof}
For a finite threshold at rank $\ell$, $q=S_{(\ell)}$. Because $\phi$ is strictly increasing, the transformed threshold is $\phi(S_{(\ell)})=\phi(q)$, and $\phi(s)\le\phi(q)$ if and only if $s\le q$. Infinite thresholds are unchanged.
\end{proof}

Let $\operatorname{logit}(r)=\log\{r/(1-r)\}$ and $\operatorname{expit}$ be its inverse. Consider the odds shift $T_b(r)=\operatorname{expit}\{\operatorname{logit}(r)+b\}$, which multiplies the odds by $e^b$ and is equivalent to shifting the intercept of a logistic model, and the temperature change $\tilde T_\kappa(r)=\operatorname{expit}\{\kappa\operatorname{logit}(r)\}$ with $\kappa>0$, which rescales the log-odds; both preserve the risk ranking and the endpoints 0 and 1. Under the odds shift, $S_b(r,0)=T_b(S(r,0))$ but $S_b(r,1)=T_{-b}(S(r,1))$: the two labels are transformed by different functions, so their joint score order, and hence pooled, missingness-aware, and risk-only sets, can change although the risk ranking does not. In contrast, $\tilde T_\kappa(1-r)=1-\tilde T_\kappa(r)$, so $S_\kappa(r,y)=\tilde T_\kappa(S(r,y))$ for both labels, and every set is unchanged by Proposition~\ref{prop:invariance}. Label-conditional calibration, which fits separate thresholds for survivors and deaths, and joint risk-by-label calibration are invariant to odds shifts as well, because within each cell the score and its threshold undergo the same map, $T_b$ or $T_{-b}$. These identities hold in exact arithmetic; in floating point, subtraction from one and evaluation of the logistic function can merge distinct scores near 0 and 1 and create threshold ties.

\paragraph{Probability-scale sensitivity.}
We multiply the predicted odds by $1/4$, 1, and 4 ($b\in\{-\log4,0,\log4\}$), apply the same transformation to calibration and test probabilities, and recalibrate the same missingness groups at $\alpha=0.1$. These transformations preserve the risk ranking and hence the AUROC. On eICU XGBoost, death coverage becomes 0.024, 0.161, and 0.593, while overall coverage stays within 0.894--0.896 (Figure~\ref{fig:outcomes} and Table~\ref{tab:scale}). No transformation is selected using test outcomes. This sensitivity is specific to odds shifts and does not estimate what retraining with class weights would produce.

\begin{figure}[t]
\centering
\includegraphics[width=.86\linewidth]{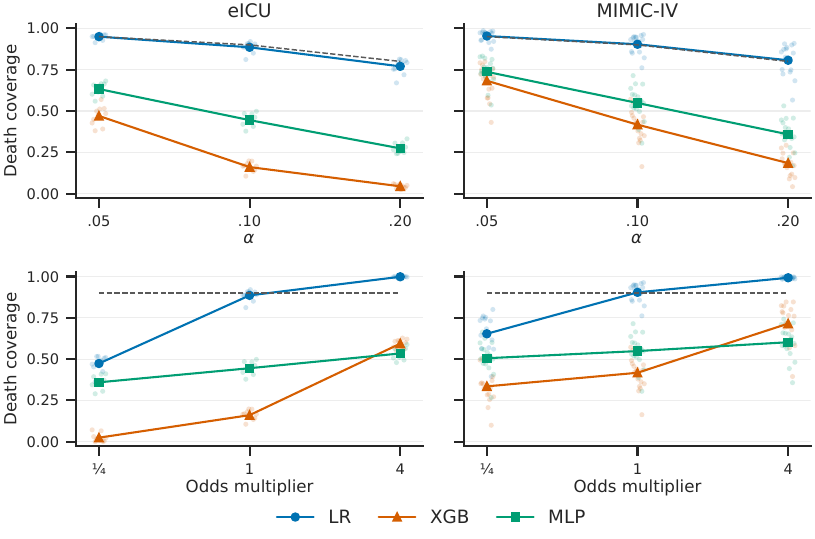}
\caption{Death coverage under missingness-aware calibration. Top: varying the error allowance $\alpha$. Bottom: multiplying predicted odds at fixed $\alpha=0.1$ and recalibrating; rankings and AUROC are unchanged. Faint points are individual assignments (ten eICU, 18 MIMIC-IV); lines show means; dashed lines mark nominal coverage. Assignments overlap, so no independence-based interval is implied.}
\label{fig:outcomes}
\end{figure}

\begin{table}[ht]
\caption{Multiplying predicted odds and recalibrating missingness groups at $\alpha=0.1$. Overall and outcome coverage, mean set size, and panel shortfall are assignment means.}
\label{tab:scale}
\centering\footnotesize\setlength{\tabcolsep}{4pt}
\input{tables/scale_table.tex}
\end{table}

\paragraph{Calibrating risk and outcome jointly.}
\label{sec:alignment}
These results suggest calibrating risk groups and outcomes together. Joint calibration fits six thresholds, one for each combination of risk tercile $g$ (fixed by the selection sample) and candidate outcome $y$: the rank-$\lceil(n_{gy}+1)(1-\alpha)\rceil$ order statistic of the scores $1-\widehat p_y$ among the $n_{gy}$ calibration observations with tercile $g$ and outcome $y$, or $+\infty$ if the rank exceeds $n_{gy}$. At prediction, each candidate label is compared with its own cell threshold, so the true test label is never used. The smallest calibration cells, across predictors and assignments, contain 16 eICU and 11 MIMIC-IV patients. No cell requires $+\infty$ at $\alpha=0.1$ or 0.2 (at $\alpha=0.05$, three of 180 eICU cells and two of 324 MIMIC-IV cells do), but small cells select their maximum score and can be conservative. Cellwise validity requires exchangeability within cells and does not follow under arbitrary shift.

For XGBoost, joint calibration achieves mean death coverage of 0.908 on eICU and 0.906 on MIMIC-IV (Table~\ref{tab:joint-all}). The largest changes occur where risk and outcome intersect (Table~\ref{tab:six-cells}). Death coverage in the lowest risk tercile rises from 0.000 and 0.015 under label-only calibration to 0.916 and 0.892 under joint calibration, and coverage among high-risk survivors rises from 0.638 to 0.895 in eICU and from 0.442 to 0.901 in MIMIC-IV. Across assignments, joint coverage of low-risk deaths ranges from 0.828 to 1.000 in eICU and from 0.750 to 1.000 in MIMIC-IV; Table~\ref{tab:target-counts} gives the cell counts.

The cost is ambiguity: for XGBoost, joint calibration returns both labels for 63.7\% of eICU and 61.1\% of MIMIC-IV patients on average, with no empty sets (Table~\ref{tab:joint-composition}). Mean set size equals $1-P(\varnothing)+P(\{0,1\})$, so without empty sets the double-label fraction equals mean size minus one. For XGBoost, joint calibration reduces the panel shortfall relative to label-only calibration but not relative to risk-only calibration. Small calibration cells may contribute to this conservatism even when thresholds are finite; more efficient ways of targeting the same intersections may exist.

\begin{table}[ht]
\caption{Coverage in the six risk--outcome cells for XGBoost. Risk tercile increases from 1 to 3; labels 0 and 1 denote survival and death. Means and joint-rule ranges are over ten eICU or 18 MIMIC-IV assignments; ranges describe variation across assignments and are not confidence intervals.}
\label{tab:six-cells}
\centering\footnotesize\setlength{\tabcolsep}{4pt}
\input{tables/six_cells.tex}
\end{table}

\begin{table}[ht]
\caption{Single-target and joint calibration at $\alpha=0.1$, full test samples. $C$ is overall coverage, $C_1$ death coverage, and $U$ panel shortfall.}
\label{tab:joint-all}
\centering\footnotesize\setlength{\tabcolsep}{4pt}
\input{tables/joint_table.tex}
\end{table}

\begin{table}[ht]
\caption{Joint calibration denominators, quantile ranks, and test denominators, XGBoost. Entries are ranges across assignments. Max rank counts assignments in which the quantile rank equals the calibration count and therefore selects the maximum score. No test cell is empty.}
\label{tab:target-counts}
\centering\footnotesize\setlength{\tabcolsep}{4pt}
\input{tables/cell_counts.tex}
\end{table}

\begin{table}[ht]
\caption{Four output-set types for the target-alignment comparisons, computed within each assignment and then averaged. Entries are mean fractions of full test populations at $\alpha=0.1$.}
\label{tab:joint-composition}
\centering\footnotesize\setlength{\tabcolsep}{4pt}
\input{tables/target_composition.tex}
\end{table}

\clearpage
\section{Exploratory calibration beyond risk}
\label{app:pilot}

This exploratory analysis asks whether missingness carries information about the score quantile beyond predicted risk. It was specified after inspecting the preceding clinical comparisons, and its settings were fixed for all 28 XGBoost assignments without further search. It uses the same selection, calibration, and full test samples and the same fitted predictors, with true-label score $S=1-\widehat p_Y$. Unlike the partition comparisons, it learns a threshold as a function of the inputs and then calibrates the residual scores, following the general principle of calibrating a fitted quantile \citep{Romano2019}; we do not claim a new conformal construction.

\paragraph{Threshold learning and validation.}
Let $r=f(Z)$ denote predicted mortality risk and $M_{\mathcal J}$ the eligible measurement masks. On selection data, we estimate the conditional 0.9 quantile of $S$ given $r$ alone, $q_r(r)$, and given $r$ and the masks, $q_m(r,M_{\mathcal J})$. Both use quantile (pinball-loss) regression with histogram-based gradient-boosted trees: 40 iterations, four leaves, minimum leaf size 100, learning rate 0.1, $L_2$ regularization 10, and no early stopping. These are matched model settings, not matched effective complexity.

For validation, sorted selection-site identifiers are assigned alternately to two folds, and each learner predicts the fold excluded from its fit. For $\lambda\in\{0,0.5,1\}$, define $q_\lambda=(1-\lambda)q_r+\lambda q_m$. We choose $\lambda$ to minimize the largest site-mean pinball loss $\rho_\tau(S-q_\lambda)$, where $\rho_\tau(x)=\max\{\tau x,(\tau-1)x\}$ and $\tau=0.9$, with ties favoring smaller $\lambda$. We then refit both learners on all selection data. MIMIC-IV has only two selection units per assignment, which limits this validation. Pinball loss is a proxy and does not guarantee lower coverage error at a new site.

\paragraph{Residual calibration.}
For each of $q_r,q_m,q_\lambda$, we compute calibration residuals $R_i=S_i-q(Z_i)$ and their conformal quantile $\widehat t$ at rank $\lceil(n+1)0.9\rceil$. The candidate label $y$ is included when $1-\widehat p_y(Z)\le q(Z)+\widehat t$; thresholds are not clipped. Given an independently fitted $q$, split-conformal marginal validity requires exchangeability of calibration and test residuals, which does not establish group or hospital validity under the shifts studied here. Label-conditional calibration instead computes the original score quantile separately for each calibration label and evaluates each candidate against its own quantile; its usual guarantee likewise requires exchangeability within labels.

\begin{table}[ht]
\caption{Learned threshold comparison, assignment means. $U$ is the largest undercoverage on the full measurement panel; Site $U$ first computes this maximum within each nonempty test site and then averages sites within an assignment. $C_1$ is coverage among deaths. Missingness and Risk bins are the original missingness and risk-tercile methods. All rows use identical test patients within an assignment.}
\label{tab:pilot}
\centering\footnotesize\setlength{\tabcolsep}{4pt}
\input{tables/residual_calibration.tex}
\end{table}

The validation step selects no mask correction in 21 of 28 assignments (nine of ten in eICU and 12 of 18 in MIMIC-IV) and $\lambda=1$ in the remaining seven. Even when selected in eICU, the correction does not change the final sets. On MIMIC-IV, site and pooled shortfalls coincide because each assignment tests one unit, whereas sparse within-site groups can make eICU site maxima noisy. We retain all assignments and do not interpret this limited family as evidence that no better missingness method exists.

\paragraph{What the binary quantile can detect.}
Let $\mu(r,m)=\Prb(Y=1\mid f(Z)=r,M=m)$ be the conditional mortality. For $r<1/2$, the true-label score is $r$ with probability $1-\mu(r,m)$ and $1-r$ otherwise, so its lower 0.9 quantile is $r$ when $\mu(r,m)\le0.1$ and $1-r$ when $\mu(r,m)>0.1$. For $r>1/2$, the lower value is $1-r$, which is the quantile when $\mu(r,m)\ge0.9$; otherwise the quantile is $r$. At $r=1/2$ both scores equal $1/2$. The 0.9 quantile thus changes with the mask only when $\mu(r,m)$ crosses 0.1 or 0.9, so additional outcome information can leave it unchanged. Moreover, $q_m=q_r+c$ for a constant $c$ yields identical residual-calibrated thresholds, because subtracting $c$ from every calibration residual shifts its order statistic by $-c$. Even nonconstant threshold changes need not cross either candidate score.

\paragraph{Controlled detection examples.}
We fix predicted risk at 0.2 and generate a balanced binary mask. Conditional death probabilities in the two mask states are $(0.08,0.08)$, $(0.02,0.08)$, or $(0.02,0.25)$, representing no additional information, information that leaves the 0.9 quantile unchanged, and information that changes it. Selection, calibration, and test samples each contain 4,000 independently generated outcomes with exactly balanced masks, and four selection sites each contain 1,000 observations. We use the same threshold learners and validation rule as in the clinical analysis, with 20 repetitions and no tuning. Population coverage is computed exactly from the known death probabilities, conditional on each fitted rule.

The validation rule chooses the mask learner in all 20 quantile-changing repetitions and the risk learner in all 40 others. In the changing case, final sets differ on half the population: the mask rule returns a singleton for the lower-mortality state and both labels for the other. Its mean population coverage is 0.99 and mean size 1.5, compared with coverage 1 and size 2 for risk alone. These examples establish sensitivity to this specific signal, not power against all clinically plausible alternatives, and they illustrate how atoms in the score can lead to conservative coverage.

\paragraph{Clinical threshold differences.}
In eICU, none of the ten mask learners fitted on the full selection sample splits on a mask feature, so their raw thresholds, calibrated thresholds, and final sets are identical to those of the risk learner. In MIMIC-IV, nine of 18 learners use mask splits. The mean fraction of test patients with different sets is 0.0169, with a maximum of 0.0577. The mean within-assignment maximum absolute threshold difference is 0.1178 before calibration and 0.1175 after it, and the calibrated difference averaged over patients and then assignments is 0.0149. These diagnostics distinguish a learner that does not use masks from threshold changes that affect some predictions; they do not determine whether unused masks contain clinically relevant information. Because XGBoost already receives missing numeric inputs, this analysis concerns information beyond its fitted risk, not the total value of missingness.

%% file: tables/hospital_table.tex
\begin{tabular}{lllrrrrr}
\toprule
$w_0$ & Shift & Rule & $A$ & $D$ & $B$ & $D-A$ & $B-D$ \\
\midrule
0.5 & 0.00 & Pooled & 0.0190 & 0.1190 & 0.1190 & 0.1000 & 0.0000 \\
0.5 & 0.00 & Mask & 0.0000 & 0.1000 & 0.1000 & 0.1000 & 0.0000 \\
0.5 & 0.12 & Pooled & 0.0736 & 0.1736 & 0.1736 & 0.1000 & 0.0000 \\
0.5 & 0.12 & Mask & 0.0545 & 0.1545 & 0.1545 & 0.1000 & 0.0000 \\
0.8 & 0.00 & Pooled & 0.0524 & 0.1076 & 0.1076 & 0.0552 & 0.0000 \\
0.8 & 0.00 & Mask & 0.0600 & 0.1000 & 0.1000 & 0.0400 & 0.0000 \\
0.8 & 0.12 & Pooled & 0.0306 & 0.1294 & 0.1294 & 0.0989 & 0.0000 \\
0.8 & 0.12 & Mask & 0.0600 & 0.1218 & 0.1218 & 0.0618 & 0.0000 \\
\bottomrule
\end{tabular}

%% file: tables/clinical_decomposition.tex
\begin{tabular}{llrrrrr}
\toprule
Model & Weights & $\Delta A$ & $\Delta D$ & $\Delta B$ & $\Delta(D-A)$ & $\Delta(B-D)$ \\
\midrule
LR & Patient & -0.01778 & -0.00643 & -0.00720 & +0.01135 & -0.00076 \\
LR & Equal & -0.01586 & -0.00464 & -0.00462 & +0.01121 & +0.00002 \\
XGB & Patient & +0.00192 & +0.00203 & +0.00205 & +0.00011 & +0.00002 \\
XGB & Equal & +0.00282 & +0.00259 & +0.00235 & -0.00022 & -0.00024 \\
MLP & Patient & -0.00768 & -0.00137 & -0.00066 & +0.00631 & +0.00071 \\
MLP & Equal & -0.00658 & -0.00095 & -0.00039 & +0.00563 & +0.00056 \\
\bottomrule
\end{tabular}

%% file: tables/covariance_table.tex
\begin{tabular}{lrrr}
\toprule
Mask state & Pooled calibration & Missingness calibration & Difference \\
\midrule
0 & -0.00362 & -0.00393 & -0.00032 \\
1 & +0.01382 & +0.01478 & +0.00096 \\
\bottomrule
\end{tabular}

%% file: tables/paired_intervals.tex
\begin{tabular}{llrrrrr}
\toprule
Data & Model & Gain (pp) & 95\% lower & 95\% upper & Adjusted lower & Adjusted upper \\
\midrule
eICU & LR & 1.91 & 1.73 & 3.17 & 1.55 & 3.46 \\
eICU & XGB & 0.34 & -0.08 & 1.07 & -0.22 & 1.33 \\
eICU & MLP & 1.17 & 1.09 & 1.94 & 1.00 & 2.16 \\
MIMIC-IV & LR & 3.13 & 1.76 & 4.52 & 1.33 & 5.01 \\
MIMIC-IV & XGB & 1.83 & 1.33 & 2.46 & 1.30 & 2.66 \\
MIMIC-IV & MLP & 2.87 & 1.83 & 3.97 & 1.61 & 4.40 \\
\bottomrule
\end{tabular}

%% file: tables/panel_sensitivity.tex
\begin{tabular}{llrrrr}
\toprule
Data & Model & All groups & $n\ge500$ & $n\ge1000$ & Standardized \\
\midrule
eICU & LR & +0.069 & +0.069 & +0.069 & +18.133 \\
eICU & XGB & +0.097 & +0.097 & +0.097 & +24.858 \\
eICU & MLP & +0.034 & +0.034 & +0.034 & +9.493 \\
MIMIC-IV & LR & +0.066 & +0.066 & +0.054 & +11.046 \\
MIMIC-IV & XGB & +0.037 & +0.040 & +0.041 & +7.255 \\
MIMIC-IV & MLP & +0.016 & +0.017 & +0.016 & +2.757 \\
\bottomrule
\end{tabular}

%% file: tables/all_methods.tex
\begin{longtable}{lllrrrr}
\toprule
Data & Model & Rule & Coverage & Size & Selected & Panel \\
\midrule
\endhead
eICU & LR & Pooled & 0.896 & 1.221 & 0.040 & 0.116 \\
eICU & LR & Missingness & 0.897 & 1.224 & 0.021 & 0.113 \\
eICU & LR & Risk (2) & 0.900 & 1.340 & 0.018 & 0.064 \\
eICU & LR & Risk (3) & 0.900 & 1.212 & 0.014 & 0.044 \\
eICU & LR & Label & 0.896 & 1.253 & 0.040 & 0.117 \\
eICU & LR & Risk + label & 0.901 & 1.669 & 0.008 & 0.034 \\
eICU & LR & Random & 0.900 & 1.227 & 0.031 & 0.099 \\
eICU & LR & Lactate & 0.895 & 1.208 & 0.025 & 0.092 \\
eICU & LR & Score tree & 0.899 & 1.273 & 0.013 & 0.050 \\
eICU & LR & Mask weighting & 0.897 & 1.223 & 0.040 & 0.115 \\
eICU & XGB & Pooled & 0.895 & 0.952 & 0.029 & 0.118 \\
eICU & XGB & Missingness & 0.895 & 0.952 & 0.026 & 0.116 \\
eICU & XGB & Risk (2) & 0.899 & 1.002 & 0.012 & 0.029 \\
eICU & XGB & Risk (3) & 0.900 & 1.026 & 0.009 & 0.020 \\
eICU & XGB & Label & 0.897 & 1.246 & 0.030 & 0.117 \\
eICU & XGB & Risk + label & 0.900 & 1.637 & 0.008 & 0.038 \\
eICU & XGB & Random & 0.899 & 0.966 & 0.022 & 0.090 \\
eICU & XGB & Lactate & 0.895 & 0.966 & 0.020 & 0.085 \\
eICU & XGB & Score tree & 0.900 & 1.031 & 0.013 & 0.026 \\
eICU & XGB & Mask weighting & 0.896 & 0.952 & 0.029 & 0.116 \\
eICU & MLP & Pooled & 0.898 & 1.037 & 0.029 & 0.080 \\
eICU & MLP & Missingness & 0.898 & 1.038 & 0.017 & 0.079 \\
eICU & MLP & Risk (2) & 0.899 & 1.100 & 0.017 & 0.051 \\
eICU & MLP & Risk (3) & 0.898 & 1.121 & 0.014 & 0.045 \\
eICU & MLP & Label & 0.899 & 1.500 & 0.029 & 0.067 \\
eICU & MLP & Risk + label & 0.900 & 1.750 & 0.010 & 0.028 \\
eICU & MLP & Random & 0.900 & 1.050 & 0.023 & 0.063 \\
eICU & MLP & Lactate & 0.897 & 1.049 & 0.018 & 0.060 \\
eICU & MLP & Score tree & 0.898 & 1.067 & 0.018 & 0.050 \\
eICU & MLP & Mask weighting & 0.897 & 1.036 & 0.029 & 0.080 \\
MIMIC-IV & LR & Pooled & 0.881 & 1.287 & 0.083 & 0.138 \\
MIMIC-IV & LR & Missingness & 0.885 & 1.296 & 0.052 & 0.124 \\
MIMIC-IV & LR & Risk (2) & 0.889 & 1.286 & 0.053 & 0.085 \\
MIMIC-IV & LR & Risk (3) & 0.896 & 1.213 & 0.038 & 0.058 \\
MIMIC-IV & LR & Label & 0.876 & 1.292 & 0.086 & 0.142 \\
MIMIC-IV & LR & Risk + label & 0.893 & 1.704 & 0.026 & 0.043 \\
MIMIC-IV & LR & Random & 0.879 & 1.278 & 0.076 & 0.126 \\
MIMIC-IV & LR & Lactate & 0.886 & 1.297 & 0.081 & 0.104 \\
MIMIC-IV & LR & Score tree & 0.892 & 1.297 & 0.047 & 0.075 \\
MIMIC-IV & LR & Mask weighting & 0.888 & 1.305 & 0.076 & 0.129 \\
MIMIC-IV & XGB & Pooled & 0.904 & 1.006 & 0.060 & 0.103 \\
MIMIC-IV & XGB & Missingness & 0.910 & 1.026 & 0.041 & 0.079 \\
MIMIC-IV & XGB & Risk (2) & 0.909 & 1.106 & 0.027 & 0.042 \\
MIMIC-IV & XGB & Risk (3) & 0.909 & 1.111 & 0.026 & 0.042 \\
MIMIC-IV & XGB & Label & 0.871 & 1.284 & 0.090 & 0.146 \\
MIMIC-IV & XGB & Risk + label & 0.895 & 1.611 & 0.027 & 0.048 \\
MIMIC-IV & XGB & Random & 0.904 & 1.015 & 0.047 & 0.085 \\
MIMIC-IV & XGB & Lactate & 0.900 & 1.018 & 0.064 & 0.091 \\
MIMIC-IV & XGB & Score tree & 0.907 & 1.111 & 0.028 & 0.045 \\
MIMIC-IV & XGB & Mask weighting & 0.913 & 1.022 & 0.048 & 0.083 \\
MIMIC-IV & MLP & Pooled & 0.899 & 1.092 & 0.058 & 0.099 \\
MIMIC-IV & MLP & Missingness & 0.905 & 1.111 & 0.029 & 0.081 \\
MIMIC-IV & MLP & Risk (2) & 0.902 & 1.161 & 0.039 & 0.065 \\
MIMIC-IV & MLP & Risk (3) & 0.904 & 1.168 & 0.038 & 0.065 \\
MIMIC-IV & MLP & Label & 0.889 & 1.527 & 0.059 & 0.092 \\
MIMIC-IV & MLP & Risk + label & 0.897 & 1.750 & 0.019 & 0.039 \\
MIMIC-IV & MLP & Random & 0.901 & 1.101 & 0.048 & 0.086 \\
MIMIC-IV & MLP & Lactate & 0.902 & 1.114 & 0.057 & 0.084 \\
MIMIC-IV & MLP & Score tree & 0.905 & 1.116 & 0.040 & 0.063 \\
MIMIC-IV & MLP & Mask weighting & 0.906 & 1.107 & 0.050 & 0.087 \\
\bottomrule
\end{longtable}

%% file: tables/set_disagreement.tex
\begin{tabular}{llrr}
\toprule
Data & Model & Mean changed (\%) & Assignment range (\%) \\
\midrule
eICU & LR & 4.52 & 0.98--11.09 \\
eICU & XGB & 1.03 & 0.01--3.91 \\
eICU & MLP & 1.52 & 0.66--3.07 \\
MIMIC-IV & LR & 9.38 & 5.40--17.57 \\
MIMIC-IV & XGB & 4.97 & 1.24--12.80 \\
MIMIC-IV & MLP & 7.08 & 3.18--13.29 \\
\bottomrule
\end{tabular}

%% file: tables/diagnostic_table.tex
\begin{longtable}{lllrrrrr}
\toprule
Data & Model & Rule & $U$ & $O$ & Panel gap & $C_0$ & $C_1$ \\
\midrule
\endhead
eICU & LR & Pooled & 0.019 & 0.026 & 0.116 & 0.897 & 0.888 \\
eICU & LR & Missingness & 0.014 & 0.010 & 0.113 & 0.898 & 0.885 \\
eICU & LR & Risk (2) & 0.009 & 0.012 & 0.064 & 0.896 & 0.937 \\
eICU & LR & Risk (3) & 0.006 & 0.008 & 0.044 & 0.903 & 0.868 \\
eICU & LR & Label & 0.019 & 0.026 & 0.117 & 0.895 & 0.906 \\
eICU & XGB & Pooled & 0.017 & 0.016 & 0.118 & 0.971 & 0.155 \\
eICU & XGB & Missingness & 0.017 & 0.009 & 0.116 & 0.970 & 0.161 \\
eICU & XGB & Risk (2) & 0.007 & 0.006 & 0.029 & 0.949 & 0.416 \\
eICU & XGB & Risk (3) & 0.008 & 0.003 & 0.020 & 0.939 & 0.512 \\
eICU & XGB & Label & 0.012 & 0.022 & 0.117 & 0.896 & 0.903 \\
eICU & MLP & Pooled & 0.014 & 0.019 & 0.080 & 0.944 & 0.446 \\
eICU & MLP & Missingness & 0.009 & 0.009 & 0.079 & 0.944 & 0.448 \\
eICU & MLP & Risk (2) & 0.008 & 0.012 & 0.051 & 0.934 & 0.555 \\
eICU & MLP & Risk (3) & 0.006 & 0.009 & 0.045 & 0.930 & 0.587 \\
eICU & MLP & Label & 0.013 & 0.019 & 0.067 & 0.898 & 0.907 \\
MIMIC-IV & LR & Pooled & 0.071 & 0.018 & 0.138 & 0.877 & 0.901 \\
MIMIC-IV & LR & Missingness & 0.038 & 0.016 & 0.124 & 0.881 & 0.904 \\
MIMIC-IV & LR & Risk (2) & 0.046 & 0.012 & 0.085 & 0.888 & 0.897 \\
MIMIC-IV & LR & Risk (3) & 0.032 & 0.011 & 0.058 & 0.906 & 0.825 \\
MIMIC-IV & LR & Label & 0.075 & 0.017 & 0.142 & 0.871 & 0.909 \\
MIMIC-IV & XGB & Pooled & 0.044 & 0.026 & 0.103 & 0.980 & 0.366 \\
MIMIC-IV & XGB & Missingness & 0.018 & 0.026 & 0.079 & 0.979 & 0.417 \\
MIMIC-IV & XGB & Risk (2) & 0.011 & 0.016 & 0.042 & 0.948 & 0.630 \\
MIMIC-IV & XGB & Risk (3) & 0.010 & 0.017 & 0.042 & 0.949 & 0.626 \\
MIMIC-IV & XGB & Label & 0.078 & 0.018 & 0.146 & 0.866 & 0.910 \\
MIMIC-IV & MLP & Pooled & 0.047 & 0.021 & 0.099 & 0.953 & 0.519 \\
MIMIC-IV & MLP & Missingness & 0.014 & 0.018 & 0.081 & 0.955 & 0.550 \\
MIMIC-IV & MLP & Risk (2) & 0.030 & 0.017 & 0.065 & 0.941 & 0.624 \\
MIMIC-IV & MLP & Risk (3) & 0.028 & 0.018 & 0.065 & 0.942 & 0.632 \\
MIMIC-IV & MLP & Label & 0.051 & 0.014 & 0.092 & 0.886 & 0.909 \\
\bottomrule
\end{longtable}

%% file: tables/composition_table.tex
\begin{longtable}{lllrrrr}
\toprule
Data & Model & Rule & $\varnothing$ & $\{0\}$ & $\{1\}$ & $\{0,1\}$ \\
\midrule
\endhead
eICU & LR & Pooled & 0.000 & 0.625 & 0.153 & 0.221 \\
eICU & LR & Missingness & 0.000 & 0.624 & 0.152 & 0.224 \\
eICU & LR & Risk (2) & 0.043 & 0.473 & 0.100 & 0.383 \\
eICU & LR & Risk (3) & 0.054 & 0.603 & 0.076 & 0.267 \\
eICU & LR & Label & 0.000 & 0.592 & 0.155 & 0.253 \\
eICU & XGB & Pooled & 0.048 & 0.935 & 0.017 & 0.000 \\
eICU & XGB & Missingness & 0.048 & 0.935 & 0.017 & 0.000 \\
eICU & XGB & Risk (2) & 0.045 & 0.891 & 0.017 & 0.047 \\
eICU & XGB & Risk (3) & 0.056 & 0.852 & 0.011 & 0.082 \\
eICU & XGB & Label & 0.000 & 0.598 & 0.156 & 0.246 \\
eICU & MLP & Pooled & 0.001 & 0.878 & 0.084 & 0.038 \\
eICU & MLP & Missingness & 0.003 & 0.874 & 0.082 & 0.041 \\
eICU & MLP & Risk (2) & 0.034 & 0.778 & 0.053 & 0.134 \\
eICU & MLP & Risk (3) & 0.042 & 0.747 & 0.046 & 0.164 \\
eICU & MLP & Label & 0.000 & 0.363 & 0.136 & 0.500 \\
MIMIC-IV & LR & Pooled & 0.000 & 0.529 & 0.184 & 0.287 \\
MIMIC-IV & LR & Missingness & 0.000 & 0.526 & 0.178 & 0.296 \\
MIMIC-IV & LR & Risk (2) & 0.047 & 0.505 & 0.115 & 0.333 \\
MIMIC-IV & LR & Risk (3) & 0.042 & 0.609 & 0.094 & 0.255 \\
MIMIC-IV & LR & Label & 0.000 & 0.518 & 0.190 & 0.292 \\
MIMIC-IV & XGB & Pooled & 0.014 & 0.924 & 0.042 & 0.020 \\
MIMIC-IV & XGB & Missingness & 0.019 & 0.902 & 0.034 & 0.045 \\
MIMIC-IV & XGB & Risk (2) & 0.046 & 0.783 & 0.018 & 0.153 \\
MIMIC-IV & XGB & Risk (3) & 0.044 & 0.782 & 0.019 & 0.155 \\
MIMIC-IV & XGB & Label & 0.000 & 0.519 & 0.197 & 0.284 \\
MIMIC-IV & MLP & Pooled & 0.000 & 0.825 & 0.083 & 0.092 \\
MIMIC-IV & MLP & Missingness & 0.004 & 0.808 & 0.074 & 0.114 \\
MIMIC-IV & MLP & Risk (2) & 0.030 & 0.724 & 0.056 & 0.191 \\
MIMIC-IV & MLP & Risk (3) & 0.029 & 0.718 & 0.056 & 0.198 \\
MIMIC-IV & MLP & Label & 0.000 & 0.311 & 0.161 & 0.527 \\
\bottomrule
\end{longtable}

%% file: tables/conditional_table.tex
\begin{tabular}{llrrrrr}
\toprule
Data & Rule & Selected gap & Panel gap & $U$ & $C$ & Size \\
\midrule
eICU & Missingness & 0.031 & 0.125 & 0.125 & 0.893 & 0.952 \\
eICU & Risk & 0.016 & 0.039 & 0.032 & 0.899 & 1.026 \\
eICU & Conditional & 0.036 & 0.055 & 0.022 & 0.900 & 1.016 \\
eICU & Conditional + mask & 0.025 & 0.051 & 0.031 & 0.899 & 1.015 \\
MIMIC-IV & Missingness & 0.045 & 0.087 & 0.068 & 0.906 & 1.024 \\
MIMIC-IV & Risk & 0.029 & 0.052 & 0.025 & 0.906 & 1.110 \\
MIMIC-IV & Conditional & 0.038 & 0.056 & 0.029 & 0.917 & 1.081 \\
MIMIC-IV & Conditional + mask & 0.040 & 0.058 & 0.028 & 0.918 & 1.083 \\
\bottomrule
\end{tabular}

%% file: tables/prevalence_table.tex
\begin{tabular}{lllrrrr}
\toprule
Data & Sample & Population & Cells & $n$ range & $\mu$ range & $N_{\rm empty}$ \\
\midrule
eICU & calibration & All & 10 & 10542--12247 & 0.071--0.101 & 0 \\
eICU & calibration & Mask & 20 & 783--11139 & 0.064--0.126 & 0 \\
eICU & calibration & Site + mask & 172 & 0--4333 & 0.016--0.266 & 7 \\
eICU & test & All & 10 & 20097--22316 & 0.081--0.109 & 0 \\
eICU & test & Mask & 20 & 2045--19316 & 0.065--0.123 & 0 \\
eICU & test & Site + mask & 342 & 0--3607 & 0.000--0.438 & 9 \\
MIMIC-IV & calibration & All & 18 & 7871--12851 & 0.106--0.174 & 0 \\
MIMIC-IV & calibration & Mask & 36 & 1211--10135 & 0.069--0.330 & 0 \\
MIMIC-IV & calibration & Site + mask & 36 & 1211--10135 & 0.069--0.330 & 0 \\
MIMIC-IV & test & All & 18 & 7227--12851 & 0.036--0.174 & 0 \\
MIMIC-IV & test & Mask & 36 & 1357--10215 & 0.020--0.330 & 0 \\
MIMIC-IV & test & Site + mask & 36 & 1357--10215 & 0.020--0.330 & 0 \\
\bottomrule
\end{tabular}

%% file: tables/population_outcomes.tex
\begin{tabular}{lllrrrr}
\toprule
Data & Model & Population & $C$ & $C_0$ & $C_1$ & Bound \\
\midrule
eICU & LR & Mask & 0.896 & 0.898 & 0.874 & 0.033 \\
eICU & LR & Site + mask & 0.899 & 0.899 & 0.857 & 0.153 \\
eICU & MLP & Mask & 0.900 & 0.945 & 0.448 & 0.018 \\
eICU & MLP & Site + mask & 0.897 & 0.943 & 0.445 & 0.082 \\
eICU & XGB & Mask & 0.895 & 0.970 & 0.157 & 0.015 \\
eICU & XGB & Site + mask & 0.892 & 0.969 & 0.163 & 0.052 \\
MIMIC-IV & LR & Mask & 0.885 & 0.876 & 0.905 & 0.270 \\
MIMIC-IV & LR & Site + mask & 0.885 & 0.876 & 0.905 & 0.270 \\
MIMIC-IV & MLP & Mask & 0.903 & 0.954 & 0.556 & 0.298 \\
MIMIC-IV & MLP & Site + mask & 0.903 & 0.954 & 0.556 & 0.298 \\
MIMIC-IV & XGB & Mask & 0.908 & 0.980 & 0.426 & 0.336 \\
MIMIC-IV & XGB & Site + mask & 0.908 & 0.980 & 0.426 & 0.336 \\
\bottomrule
\end{tabular}

%% file: tables/alpha_table.tex
\begin{tabular}{lllrrrrr}
\toprule
Data & Model & $\alpha$ & $C$ & $C_0$ & $C_1$ & Size & $U$ \\
\midrule
eICU & LR & 0.05 & 0.948 & 0.948 & 0.950 & 1.433 & 0.071 \\
eICU & LR & 0.10 & 0.897 & 0.898 & 0.885 & 1.224 & 0.112 \\
eICU & LR & 0.20 & 0.797 & 0.800 & 0.770 & 0.985 & 0.163 \\
eICU & MLP & 0.05 & 0.948 & 0.981 & 0.631 & 1.209 & 0.044 \\
eICU & MLP & 0.10 & 0.898 & 0.944 & 0.448 & 1.038 & 0.079 \\
eICU & MLP & 0.20 & 0.797 & 0.851 & 0.277 & 0.867 & 0.124 \\
eICU & XGB & 0.05 & 0.949 & 0.998 & 0.470 & 1.066 & 0.031 \\
eICU & XGB & 0.10 & 0.895 & 0.970 & 0.161 & 0.952 & 0.116 \\
eICU & XGB & 0.20 & 0.795 & 0.872 & 0.044 & 0.823 & 0.219 \\
MIMIC-IV & LR & 0.05 & 0.942 & 0.940 & 0.954 & 1.507 & 0.079 \\
MIMIC-IV & LR & 0.10 & 0.885 & 0.881 & 0.904 & 1.296 & 0.114 \\
MIMIC-IV & LR & 0.20 & 0.777 & 0.772 & 0.807 & 1.032 & 0.140 \\
MIMIC-IV & MLP & 0.05 & 0.955 & 0.985 & 0.740 & 1.328 & 0.042 \\
MIMIC-IV & MLP & 0.10 & 0.905 & 0.955 & 0.550 & 1.111 & 0.072 \\
MIMIC-IV & MLP & 0.20 & 0.800 & 0.863 & 0.360 & 0.904 & 0.113 \\
MIMIC-IV & XGB & 0.05 & 0.959 & 0.998 & 0.683 & 1.190 & 0.016 \\
MIMIC-IV & XGB & 0.10 & 0.910 & 0.979 & 0.417 & 1.026 & 0.061 \\
MIMIC-IV & XGB & 0.20 & 0.789 & 0.875 & 0.185 & 0.845 & 0.179 \\
\bottomrule
\end{tabular}

%% file: tables/scale_table.tex
\begin{tabular}{lllrrrrr}
\toprule
Data & Model & Odds factor & $C$ & $C_0$ & $C_1$ & Size & $U$ \\
\midrule
eICU & LR & 0.25 & 0.896 & 0.939 & 0.473 & 0.985 & 0.104 \\
eICU & LR & 1.00 & 0.897 & 0.898 & 0.885 & 1.224 & 0.112 \\
eICU & LR & 4.00 & 0.896 & 0.885 & 0.998 & 1.751 & 0.124 \\
eICU & MLP & 0.25 & 0.898 & 0.950 & 0.391 & 1.018 & 0.082 \\
eICU & MLP & 1.00 & 0.898 & 0.944 & 0.448 & 1.038 & 0.079 \\
eICU & MLP & 4.00 & 0.898 & 0.938 & 0.510 & 1.064 & 0.077 \\
eICU & XGB & 0.25 & 0.894 & 0.984 & 0.024 & 0.962 & 0.121 \\
eICU & XGB & 1.00 & 0.895 & 0.970 & 0.161 & 0.952 & 0.116 \\
eICU & XGB & 4.00 & 0.896 & 0.927 & 0.593 & 1.004 & 0.104 \\
MIMIC-IV & LR & 0.25 & 0.901 & 0.934 & 0.653 & 1.076 & 0.084 \\
MIMIC-IV & LR & 1.00 & 0.885 & 0.881 & 0.904 & 1.296 & 0.114 \\
MIMIC-IV & LR & 4.00 & 0.876 & 0.861 & 0.992 & 1.687 & 0.126 \\
MIMIC-IV & MLP & 0.25 & 0.906 & 0.963 & 0.507 & 1.092 & 0.069 \\
MIMIC-IV & MLP & 1.00 & 0.905 & 0.955 & 0.550 & 1.111 & 0.072 \\
MIMIC-IV & MLP & 4.00 & 0.903 & 0.946 & 0.603 & 1.136 & 0.072 \\
MIMIC-IV & XGB & 0.25 & 0.914 & 0.995 & 0.335 & 1.048 & 0.055 \\
MIMIC-IV & XGB & 1.00 & 0.910 & 0.979 & 0.417 & 1.026 & 0.061 \\
MIMIC-IV & XGB & 4.00 & 0.893 & 0.919 & 0.715 & 1.096 & 0.092 \\
\bottomrule
\end{tabular}

%% file: tables/six_cells.tex
\begin{tabular}{lrrrrrr}
\toprule
Data & Risk & Label & Risk only & Label only & Joint & Joint range \\
\midrule
eICU & 1 & 0 & 0.905 & 1.000 & 0.900 & 0.887--0.914 \\
eICU & 1 & 1 & 0.000 & 0.000 & 0.916 & 0.828--1.000 \\
eICU & 2 & 0 & 0.928 & 1.000 & 0.901 & 0.894--0.913 \\
eICU & 2 & 1 & 0.000 & 0.308 & 0.921 & 0.873--0.954 \\
eICU & 3 & 0 & 0.997 & 0.638 & 0.895 & 0.873--0.921 \\
eICU & 3 & 1 & 0.589 & 1.000 & 0.906 & 0.883--0.924 \\
MIMIC-IV & 1 & 0 & 0.901 & 1.000 & 0.886 & 0.831--0.936 \\
MIMIC-IV & 1 & 1 & 0.000 & 0.015 & 0.892 & 0.750--1.000 \\
MIMIC-IV & 2 & 0 & 0.974 & 1.000 & 0.893 & 0.857--0.940 \\
MIMIC-IV & 2 & 1 & 0.043 & 0.662 & 0.901 & 0.818--0.958 \\
MIMIC-IV & 3 & 0 & 0.987 & 0.442 & 0.901 & 0.852--0.970 \\
MIMIC-IV & 3 & 1 & 0.777 & 1.000 & 0.905 & 0.792--0.954 \\
\bottomrule
\end{tabular}

%% file: tables/joint_table.tex
\begin{tabular}{lllrrrr}
\toprule
Data & Model & Rule & $C$ & $C_1$ & $U$ & Size \\
\midrule
eICU & LR & Label & 0.896 & 0.906 & 0.116 & 1.253 \\
eICU & LR & Risk + label & 0.901 & 0.904 & 0.034 & 1.669 \\
eICU & LR & Risk (3) & 0.900 & 0.868 & 0.043 & 1.212 \\
eICU & MLP & Label & 0.899 & 0.907 & 0.065 & 1.500 \\
eICU & MLP & Risk + label & 0.900 & 0.906 & 0.026 & 1.750 \\
eICU & MLP & Risk (3) & 0.898 & 0.587 & 0.045 & 1.121 \\
eICU & XGB & Label & 0.897 & 0.903 & 0.117 & 1.246 \\
eICU & XGB & Risk + label & 0.900 & 0.908 & 0.038 & 1.637 \\
eICU & XGB & Risk (3) & 0.900 & 0.512 & 0.019 & 1.026 \\
MIMIC-IV & LR & Label & 0.876 & 0.909 & 0.133 & 1.292 \\
MIMIC-IV & LR & Risk + label & 0.893 & 0.910 & 0.038 & 1.704 \\
MIMIC-IV & LR & Risk (3) & 0.896 & 0.825 & 0.052 & 1.213 \\
MIMIC-IV & MLP & Label & 0.889 & 0.909 & 0.085 & 1.527 \\
MIMIC-IV & MLP & Risk + label & 0.897 & 0.916 & 0.032 & 1.750 \\
MIMIC-IV & MLP & Risk (3) & 0.904 & 0.632 & 0.058 & 1.168 \\
MIMIC-IV & XGB & Label & 0.871 & 0.910 & 0.138 & 1.284 \\
MIMIC-IV & XGB & Risk + label & 0.895 & 0.906 & 0.043 & 1.611 \\
MIMIC-IV & XGB & Risk (3) & 0.909 & 0.626 & 0.019 & 1.111 \\
\bottomrule
\end{tabular}

%% file: tables/cell_counts.tex
\begin{tabular}{lrrrrrr}
\toprule
Data & Risk & Label & $n_{\rm cal}$ & Rank & $n_{\rm test}$ & Max rank \\
\midrule
eICU & 1 & 0 & 3042--4418 & 2739--3978 & 6331--7797 & 0 \\
eICU & 1 & 1 & 18--35 & 18--33 & 23--60 & 1 \\
eICU & 2 & 0 & 3100--3835 & 2791--3453 & 5854--7621 & 0 \\
eICU & 2 & 1 & 81--158 & 74--144 & 171--302 & 0 \\
eICU & 3 & 0 & 2245--3443 & 2022--3100 & 4966--6595 & 0 \\
eICU & 3 & 1 & 622--1088 & 561--981 & 1372--2051 & 0 \\
MIMIC-IV & 1 & 0 & 3024--4944 & 2723--4451 & 1992--5012 & 0 \\
MIMIC-IV & 1 & 1 & 18--150 & 18--136 & 8--155 & 1 \\
MIMIC-IV & 2 & 0 & 1904--4056 & 1715--3652 & 1972--4442 & 0 \\
MIMIC-IV & 2 & 1 & 108--389 & 99--351 & 37--409 & 0 \\
MIMIC-IV & 3 & 0 & 723--3473 & 652--3127 & 736--4725 & 0 \\
MIMIC-IV & 3 & 1 & 553--1991 & 499--1793 & 331--1987 & 0 \\
\bottomrule
\end{tabular}

%% file: tables/target_composition.tex
\begin{tabular}{lllrrrr}
\toprule
Data & Model & Rule & $\varnothing$ & $\{0\}$ & $\{1\}$ & $\{0,1\}$ \\
\midrule
eICU & LR & Label & 0.000 & 0.592 & 0.155 & 0.253 \\
eICU & LR & Risk + label & 0.000 & 0.202 & 0.129 & 0.669 \\
eICU & LR & Risk (3) & 0.054 & 0.603 & 0.076 & 0.267 \\
eICU & MLP & Label & 0.000 & 0.363 & 0.136 & 0.500 \\
eICU & MLP & Risk + label & 0.000 & 0.128 & 0.122 & 0.750 \\
eICU & MLP & Risk (3) & 0.042 & 0.747 & 0.046 & 0.164 \\
eICU & XGB & Label & 0.000 & 0.598 & 0.156 & 0.246 \\
eICU & XGB & Risk + label & 0.000 & 0.228 & 0.135 & 0.637 \\
eICU & XGB & Risk (3) & 0.056 & 0.852 & 0.011 & 0.082 \\
MIMIC-IV & LR & Label & 0.000 & 0.518 & 0.190 & 0.292 \\
MIMIC-IV & LR & Risk + label & 0.000 & 0.154 & 0.142 & 0.704 \\
MIMIC-IV & LR & Risk (3) & 0.042 & 0.609 & 0.094 & 0.255 \\
MIMIC-IV & MLP & Label & 0.000 & 0.311 & 0.161 & 0.527 \\
MIMIC-IV & MLP & Risk + label & 0.000 & 0.117 & 0.133 & 0.750 \\
MIMIC-IV & MLP & Risk (3) & 0.029 & 0.718 & 0.056 & 0.198 \\
MIMIC-IV & XGB & Label & 0.000 & 0.519 & 0.197 & 0.284 \\
MIMIC-IV & XGB & Risk + label & 0.000 & 0.245 & 0.143 & 0.611 \\
MIMIC-IV & XGB & Risk (3) & 0.044 & 0.782 & 0.019 & 0.155 \\
\bottomrule
\end{tabular}

%% file: tables/residual_calibration.tex
\begin{tabular}{llrrrrr}
\toprule
Data & Rule & Coverage & Size & $U$ & Site $U$ & $C_1$ \\
\midrule
eICU & Missingness & 0.895 & 0.952 & 0.116 & 0.194 & 0.161 \\
eICU & Risk bins & 0.900 & 1.026 & 0.019 & 0.073 & 0.512 \\
eICU & Learned risk & 0.898 & 1.010 & 0.040 & 0.098 & 0.457 \\
eICU & + Mask & 0.898 & 1.010 & 0.040 & 0.098 & 0.457 \\
eICU & Gated & 0.898 & 1.010 & 0.040 & 0.098 & 0.457 \\
eICU & Label & 0.897 & 1.246 & 0.117 & 0.169 & 0.903 \\
\midrule
MIMIC-IV & Missingness & 0.910 & 1.026 & 0.061 & 0.061 & 0.417 \\
MIMIC-IV & Risk bins & 0.909 & 1.111 & 0.019 & 0.019 & 0.626 \\
MIMIC-IV & Learned risk & 0.910 & 1.040 & 0.051 & 0.051 & 0.447 \\
MIMIC-IV & + Mask & 0.909 & 1.043 & 0.050 & 0.050 & 0.462 \\
MIMIC-IV & Gated & 0.909 & 1.042 & 0.051 & 0.051 & 0.459 \\
MIMIC-IV & Label & 0.871 & 1.284 & 0.138 & 0.138 & 0.910 \\
\bottomrule
\end{tabular}